\DeclareMathOperator*{\argmax}{arg\,max}
\newtheorem{definition}{Definition}
\newtheorem{lemma}{Lemma}
\newtheorem{corollary}{Corollary}
\newtheorem{theorem}{Theorem}
\newtheorem{assumption}{Assumption}
\newtheorem{conjecture}{Conjecture}
\newcommand{\mat}[1]{\mathbf{#1}}
\newcommand{\del}[1]{{\partial #1}}
\newcommand{\deltwo}[1]{{\partial^2 #1}}
\newcommand{\indep }{\perp\!\!\!\perp}
\newcommand{\ci}[3]{#1 \indep #2 \left\vert #3\right. }
\newcommand{\nci}[3]{#1 \not\!\perp\!\!\!\perp #2 \left\vert #3\right. }
\newcommand{\p}[1]{\mathbb{P}\left[#1\right]}
\newcommand{\pp}[1]{\mathbb{P}\left(#1\right)}
\newcommand{\renyi}{R\'{e}nyi }
\newcommand{\rce}{\text{R\'enyi}}
\begin{document} 
	\date{\today}
	\title{Applications of Common Entropy for Causal Inference}
	\author[1,a ]{Murat Kocaoglu}
	\author[2,b]{Sanjay Shakkottai}
	\author[2,c]{Alexandros G. Dimakis}
	\author[2,d]{Constantine Caramanis}
	\author[2,e]{Sriram Vishwanath}
	
	\affil[1]{\small MIT-IBM Watson AI Lab, IBM Research, MA, USA} 
	\affil[2]{\small Department of Electrical and Computer Engineering, The University of Texas at Austin, USA}
	\affil[ ]{\small \textit \textsuperscript{a} mkocaoglu@utexas.edu \textsuperscript{b}
		shakkott@austin.utexas.edu \textsuperscript{c}
		dimakis@austin.utexas.edu \textsuperscript{d} constantine@utexas.edu \textsuperscript{e} sriram@austin.utexas.edu}
	
	\renewcommand\Authands{ and }
	
	\maketitle

\begin{abstract}
We study the problem of discovering the simplest latent variable that can make two observed discrete variables conditionally independent. The minimum entropy required for such a latent is known as common entropy in information theory. We extend this notion to \renyi common entropy by minimizing the \renyi entropy of the latent variable. To efficiently compute common entropy, we propose an iterative algorithm that can be used to discover the trade-off between the entropy of the latent variable and the conditional mutual information of the observed variables. We show two applications of common entropy in causal inference: First, under the assumption that there are no low-entropy mediators, it can be used to distinguish causation from spurious correlation among almost all joint distributions on simple causal graphs with two observed variables. Second, common entropy can be used to improve constraint-based methods such as PC or FCI algorithms in the small-sample regime, where these methods are known to struggle. We propose a modification to these constraint-based methods to assess if a separating set found by these algorithms is valid using common entropy. We finally evaluate our algorithms on synthetic and real data to establish their performance.
\end{abstract}

\section{Introduction}
\label{sec:intro}
Understanding the causal workings of a system from data is essential in many fields of science and engineering. Recently, there has been increasing interest in causal inference in the machine learning (ML) community. While most of ML has traditionally been relying solely on correlations in the data, it is now widely accepted that distinguishing causation from correlation is useful even for simple predictive tasks. This is because causal relations are more robust to the changes in the dataset and can help with generalization,  while an ML system relying solely on correlations might suffer when these correlations change in the environment the system is deployed in \cite{de2019causal}.

A causal graph is a directed acyclic graph that depicts the causal workings of the system under study \cite{Pearl2009}. Since it indicates the causes of each variable, it can be seen as a qualitative summary of the underlying mechanisms. Learning the causal graph is the first step for most of the causal inference tasks, since inference algorithms rely on the causal structure. Causal graphs can be learned from randomized experiments \cite{Hauser2012a,EberhardtThesis,Shanmugam2015,kocaoglu2017experimental,kocaoglu2019characterization}. In settings where performing experiments are costly or infeasible, one needs to resort to observational methods, i.e., make best use of observational data, potentially under assumptions about the data generating mechanisms.

There is a rich literature on learning causal graphs from observational data \cite{Spirtes2001,zhang2006causal,colombo2014order,Andersson1997,Meek1995a,Shimizu2006,Hoyer2008,verma1992algorithm,Chickering2002}. \emph{Score-based} methods optimize a regularized likelihood function in order to discover the causal graph. Under certain assumptions, these methods are consistent; they obtain a causal graph that is in the correct equivalence class for the given data \cite{meek1997graphical,chickering2002finding}. However, score-based methods are applicable only in the causally sufficient setting, i.e., when there are no latent confounders. A variable is called a latent confounder if it is not observable and causes at least two observed nodes. \emph{Constraint-based} methods directly recover the equivalence class in the form of a mixed graph \cite{Andersson1997,meek1997graphical,Spirtes2001, Meek1995a,zhang2008completeness}: They test the conditional independence (CI) constraints in the data and use them to infer as much as possible about the causal graph. Despite being well-established for graphs with or without latents, constraint-based methods are known to work well only with an abundance of data. Early errors in CI statements might lead to drastically different graphs due to the sequential nature of these algorithms. A third class of algorithms can be described as those imposing assumptions in order to identify the graphs which are otherwise not identifiable \cite{Hoyer2008,Shimizu2006,Peters2014,janzing2009identifying,Kocaoglu2017}. Most of this literature focuses on the cornerstone case of two variables $X,Y$ where constraint and score-based approaches are unable to identify if $X$ causes $Y$ or $Y$ causes $X$, simply because they are indistinguishable without additional assumptions.  This literature contains a wide range of assumptions that we summarize in Section \ref{app:related_work}. 

Information theory has been shown to provide tools that can be useful for causal discovery  \cite{chaves2014inferring,Quinn2015,Kontoyiannis2016,Etesami2016,weilenmann2017analysing,Kocaoglu2017,steudel2015information}. In this work, we explore the uses of \emph{common entropy} for learning causal graphs from observational data. To define common entropy, first consider the following problem:  Given the joint probability distribution of two discrete random variables $X,Y$, we want to construct a third random variable $Z$ such that  $X$ and $Y$ are independent conditioned on $Z$. Without any constraints this can be trivially achieved: Simply picking $Z = X$ or $Z = Y$ ensures that $\ci{X}{Y}{Z}$. However, this trivial solution requires $Z$ to be as \emph{complex} as $X$ or $Y$. We then ask the following question: \emph{is there a simple $Z$ that makes $X,Y$ conditionally independent?} In this work, we use \renyi entropy of the variable as a notion of its complexity. Then the problem becomes identifying $Z$ with the smallest \renyi entropy such that $\ci{X}{Y}{Z}$. Shannon entropy of this $Z$ is called the \emph{common entropy} of $X$ and $Y$~\cite{kumar2014exact}.

We demonstrate two uses of common entropy for causal discovery. The first is in the setting of two observed variables. Suppose we observe 
two correlated variables $X,Y$. Figure \ref{fig:causal_graphs} shows some causal graphs that can induce correlation between $X,Y$. Note that \emph{Latent Graph} differs from the others in that $X$ does not have any causal effect on $Y$. Then distinguishing the latent graph from the others is important to understand whether an intervention on one of the variables will cause a change in the other. We show that if the latent confounder is \emph{simple}, i.e., has small entropy then one can distinguish the latent graph from the triangle and direct graphs using common entropy. To identify the latent graph, we assume that the correlation is not induced only by a simple mediator, which eliminates the mediator graph. We show that this is a realistic assumption using simulated and real data. 

Second, we show that common entropy can be used to improve constraint-based causal discovery algorithms in the small sample regime. For such algorithms, correctly identifying \emph{separating sets}, i.e., sets of variables that can make each pair conditionally independent, is crucial. Our key observation is that, for a given pair of variables common entropy provides us with an information-theoretic lower bound on the entropy of any separating set. Therefore, it can be used to reject incorrect separating sets. We present our modification on the PC algorithm, which is called the \emph{EntropicPC} algorithm. 

To the best of our knowledge, the only result for finding common entropy is given in \cite{kumar2014exact}, where they identify its analytical expression for binary variables. They also note that the problem is difficult in the general case. To address this gap, in Section \ref{sec:common_entropy} we propose an iterative algorithm to approximate common entropy. We also generalize the notion of common entropy to \emph{\renyi common entropy}. 

Our contributions can be summarized as follows:
\begin{itemize}
	\item In Section \ref{sec:common_entropy}, we introduce the notion of \renyi common entropy. We propose a practical algorithm for finding common entropy and prove certain guarantees. Readers interested only in  the applications of common entropy to causal inference can skip this section.
	\item In Section \ref{sec:bivariate}, under certain assumptions, we show that $\rce_0$ common entropy can be used to distinguish latent graph from the triangle and direct graphs in Figure \ref{fig:causal_graphs}. We also show this identifiability result via $\rce_1$ common entropy for binary variables, and propose a conjecture for the general case. In Section \ref{sim:mediator}, we validate one of our key assumptions in real and synthetic data. In Section \ref{sim:conjecture}, we validate our conjecture via real and synthetic data. 
	\item In Section \ref{sec:entropicPC}, we propose \emph{EntropicPC}, a modified version of the PC algorithm that uses common entropy to improve sample efficiency. In Section \ref{sim:EntropicPC}, we demonstrate significant performance improvement for EntropicPC compared to the baseline PC algorithm. We also illustrate that EntropicPC discovers some of the edges missed by PC in ADULT data \cite{Dua2017}. 
	\item In Section \ref{sec:experiments}, in addition to the above, we provide experiments on the performance of our algorithm for finding common entropy, as well as its performance on distinguishing the latent graph from the triangle graph on synthetic data.
\end{itemize}

\textbf{Notation: }
Support of a discrete random variable $X$ is shown as $\mathcal{X}$. $p(.)$ and $q(.)$ are reserved for discrete probability distribution functions (pmfs). $[n]\coloneqq \{1,2,\hdots,n\}$. $p(Y|x)$ is shorthand for the conditional distribution of $Y$ given $X\!=\!x$. Shannon entropy, or entropy in short, is $H(X)\!=\!-\sum_x p(x)\log(p(x))$. \renyi entropy of order $r$ is $H_r(X)\!=\!\frac{1}{1-r}\log\left(\sum_xp^r(x)\right)$. It can be shown that \renyi entropy of order $1$ is identical to Shannon entropy.  $D\!=\!(\mathcal{V},\mathcal{E})$ is a directed acyclic graph with vertex set $\mathcal{V}$ and edge set $\mathcal{E}\subset \mathcal{V}\times \mathcal{V}$. $Pa_i$ represents the set of parents of vertex $X_i$ in the graph and $pa_i$ a specific realization. If $D$ is a causal graph, the joint distribution between the variables (vertices of the graph)  factorizes relative to the graph as $p(x_1,x_2,\hdots,)\!=\!\prod_ip(x_i|pa_i)$. 

\begin{figure}
\centering
\begin{subfigure}[b]{0.24\textwidth}
	\centering
	\begin{tikzpicture}
	\node[draw,shape=circle,below] (x) at (0,0) {$X$};
	\node[draw,shape=circle,below] (y) at (1.4,0) {$Y$};
	\node[draw,shape=rectangle,below] (u) at (0.7,0.5){$Z$};
	\draw (u) edge[->]  (x)
	(u) edge[->] (y);
	\end{tikzpicture}
	\caption{Latent Graph}
\end{subfigure}
\begin{subfigure}[b]{0.24\textwidth}
	\centering
	\begin{tikzpicture}
	\node[draw,shape=circle,below] (x) at (0,0) [circle] {$X$};
	\node[draw,shape=circle,below] (y) at (1.4,0) [circle] {$Y$};
	\node[draw,shape=rectangle,below] (u) at (0.7,0.5) [rectangle] {$Z$};
	\draw (u) edge[->] (x) 
	(u) edge[->] (y)
	(x) edge[->] (y);
	\end{tikzpicture}
	\caption{Triangle Graph}
\end{subfigure}		
\begin{subfigure}[b]{0.24\textwidth}
	\centering
	\begin{tikzpicture}
	\node[draw,shape=circle,below] (x) at (0,0) {$X$};
	\node[draw,shape=circle,below] (y) at (1.4,0) {$Y$};
	\draw (x) edge[->] (y);
	\end{tikzpicture}
	\caption{Direct Graph}	
\end{subfigure}
\begin{subfigure}[b]{0.24\textwidth}
	\centering
    \begin{tikzpicture}
	\node[draw,shape=circle,below] (x) at (0,0) {$X$};
	\node[draw,shape=circle,below] (y) at (2.,0) {$Y$};
	\node[draw,shape=rectangle,below] (m) at (1.,-0.1){$M$};
	\draw (x) edge[->] (m) 	(m) edge[->] (y);
\end{tikzpicture}
\caption{Mediator Graph}	
\end{subfigure}
\caption{Different graphs that explain correlation between the observed $X,Y$. $Z, M$ are unobserved. 
}
\label{fig:causal_graphs}
\end{figure}
\section{\renyi Common Entropy}
\label{sec:common_entropy}
We introduce the notion of \renyi common entropy, which generalizes the common entropy of \cite{kumar2014exact}. 
\begin{definition}
	\renyi common entropy of order $r$ or $\rce_r$ common entropy of two random variables $X,Y$ with probability distribution $p(x,y)$ is shown by  $G_r(X,Y)$ and is defined as follows:
	\begin{equation}
	\label{eq:common_entropy}
	\begin{aligned}
	G_r(X,Y)\coloneqq &\underset{q(x,y,z)}{\min} & & H_r(Z)\\
	& \,\,\,\,\,\text{s.t.}
	&&I(X;Y|Z)=0;\sum_z q(x,y,z) = p(x,y),\forall x, y; \sum q(.)=1; q(.)\geq 0\\
	\end{aligned}
	\end{equation}
\end{definition}
\renyi common entropy lower bounds the \renyi entropy of any variable that makes the observed variables conditionally independent. We focus on two special cases: $\rce_0$ and $\rce_1$ common entropies. Among all variables $Z$ such that $\ci{X}{Y}{Z}$, $\rce_0$ common entropy is the logarithm of the minimum number of states of $Z$ and  $\rce_1$ common entropy is its minimum entropy.

In Section \ref{sec:bivariate}, we show that $\rce_0$ common entropy can be used for distinguishing the latent graph from the triangle or direct graphs in Figure \ref{fig:causal_graphs}. Since we expect $\rce_0$ common entropy to be sensitive to finite-sample noise in practice, we focus on $\rce_1$ common entropy. $\rce_1$ common entropy, or simply common entropy, was introduced in \cite{kumar2014exact}, where authors derived the analytical expression for two binary variables. They also remark that finding common entropy for non-binary variables is difficult. We propose an iterative update algorithm to approximate common entropy in practice, by assuming that we have access to the joint distribution between $X,Y$.

\subsection*{LatentSearch: An Algorithm for Calculating $\rce_1$ Common Entropy}
In this section, our objective is to solve a relaxation of the $\rce_1$ common entropy problem in (\ref{eq:common_entropy}). Instead of enforcing conditional independence as a hard constraint of the optimization problem, we introduce conditional mutual information as a regularizer to the objective function. This allows us to discover a trade-off between two factors, the entropy $H(Z)$ of the third variable and the residual dependency between $X,Y$ after conditioning on $Z$, measured by $I(X;Y|Z)$. We then have the loss 
\begin{equation}
\label{eq:loss} \mathcal{L} = I(X;Y|Z) + \beta H(Z).
\end{equation}
Rather than searching over $q(x,y,z)$ and enforcing the constraint $\sum_{z}q(x,y,z)=p(x,y), \forall x,y$, we can search over $q(z|x,y)$ and set $q(x,y,z) =
q(z|x,y)p(x,y)$. Therefore we have
$\mathcal{L} = \mathcal{L}(q(z|x,y))$. The support size of $Z$ determines the number of optimization variables. Proposition 5 in \cite{kumar2014exact} shows that without loss of generality, we can assume $\lvert\mathcal{Z}\rvert\leq \lvert\mathcal{X}\rvert\lvert\mathcal{Y}\rvert$. In general, $\mathcal{L}$ is neither convex nor concave. Although first order methods (e.g., gradient descent) can be used to find a stationary point, as we empirically observe the convergence is slow and the performance is very sensitive to the step size. 
\begin{algorithm}[t]
	\small
	\caption{LatentSearch: Iterative Update Algorithm}
	\begin{algorithmic}[1]
		\STATE \textbf{Input:} Supports of $x,y,z$, $\mathcal{X,Y,Z},$
		respectively. $\beta \geq 0$ used in (\ref{eq:loss}). Observed joint $p(x,y)$. Initialization $q_1(z|x,y)$. Number of iterations $N$.
		\STATE \textbf{Output:} Joint distribution $q(x,y,z)$
		\FOR{$i \in [N]$}
		\STATE \emph{Form the joint:}\\
		$q_i(x,y,z) \leftarrow q_i(z|x,y)p(x,y), \forall x,y,z$.
		\STATE \emph{Calculate:}\\
		$q_i(z|x) \leftarrow \frac{\sum\limits_{y\in\mathcal{Y}} q_i(x,y,z)}{\sum\limits_{y\in\mathcal{Y},z\in\mathcal{Z}} q_i(x,y,z)}, \hspace{0.2in} q_i(z|y) \leftarrow \frac{\sum\limits_{x\in\mathcal{X}} q_i(x,y,z)}{\sum\limits_{x\in\mathcal{X},z\in\mathcal{Z}} q_i(x,y,z)}, \hspace{0.2in} q_i(z) \leftarrow \sum\limits_{x\in\mathcal{X},y\in\mathcal{Y}} q_i(x,y,z)$
		\STATE \emph{Update:}\\
		$q_{i+1}(z|x,y) \leftarrow \frac{1}{F(x,y)}\frac{q_i(z|x)q_i(z|y)}{q_i(z)^{1-\beta}}$, where $F(x,y) = \sum\limits_{z\in\mathcal{Z}} \frac{q_i(z|x)q_i(z|y)}{q_i(z)^{1-\beta}}$.
		\ENDFOR
		\STATE return $q(x,y,z)\coloneqq q_{N+1}(z|x,y)p(x,y)$
	\end{algorithmic}
	\label{iterative_algorithm}
\end{algorithm}
To this end, we propose a multiplicative update algorithm {\em LatentSearch} in Algorithm \ref{iterative_algorithm}. Given $p(x,y)$, \emph{LatentSearch} starts from a random initialization $q_0(z|x,y)$, and at each step $i$ iteratively updates $q_i(z|x,y)$ to $q_{i+1}(z|x,y)$ to minimize the loss (\ref{eq:loss}). Specifically, in the $i^{th}$ step it marginalizes the joint $q_i(x,y,z)$ to get $q_i(z|x), q_i(z|y),$ and
$q_i(z),$ and imposing a scaled product form on these marginals,
updates the joint to return  $q_{i+1}(x,y,z)$. This decomposition and the update rule are motivated by the partial derivatives associated with the Lagrangian of the loss function (\ref{eq:loss}) (See Section \ref{sec:alg_proof}). More formally, as we show in the following theorem, after convergence \emph{LatentSearch} outputs a stationary point of the loss function. For the proof, please see Sections \ref{sec:alg_proof}, \ref{sec:alg_proof2}.  

\begin{theorem}
	\label{thm:stationary}
	The stationary points of LatentSearch are also stationary points of the loss in (\ref{eq:loss}). Moreover, for $\beta = 1$, LatentSearch converges to either a local minimum or a saddle point of (\ref{eq:loss}), unless it is initialized at a local maximum.
\end{theorem}

Therefore, if the algorithm converges to a solution, it outputs either a local minimum, local maximum or a saddle point. We observe in our experiments that the algorithm always converges for $\beta\leq 1$.

For each $\beta$, \emph{LatentSearch} outputs a distribution $q(.)$ from which $H(Z)$ can be calculated. When using \emph{LatentSearch} to approximate $\rce_1$ common entropy, we will run it for multiple $\beta$ values and pick the distribution $q(.)$ with the smallest $H(Z)$ such that $I(X;Y|Z)\leq\theta$ for a practical threshold $\theta$ to declare conditional independence. See Figure \ref{fig:IHplane} for a sample output of \emph{LatentSearch} for multiple $\beta$ values in the $I-H$ plane. See also lines $6-7$ of Algorithm \ref{causal_inference_algorithm} for an algorithmic description.  
\section{Identifying Correlation without Causation via \renyi Common Entropy}\label{sec:bivariate}
Suppose we observe two discrete random variables $X,Y$ to be statistically dependent. Reichenbach’s common cause principle states that $X$ and $Y$ are either causally related, or there is a common cause that induces the correlation.\footnote{We assume no selection bias in this work, which can also induce spurious correlation.} If the correlation is \emph{only} due to a common cause, intervening on either variable will have no effect on the other. Therefore, it is important for policy decisions to identify this case of \emph{correlation without causation}. Specifically, we want to distinguish latent graph from the triangle or direct graphs in Figure \ref{fig:causal_graphs}. Since our goal is not to identify the causal direction between $X$ and $Y$, we use triangle, direct and mediator graphs to refer to either direction. We show that, under certain assumptions, \renyi common entropy can be used to solve this identification problem.  

Our key assumption is that the latent confounders, if they exist, have small \renyi entropy. In other words, in Figure \ref{fig:causal_graphs} $H_r(Z)\leq \theta_r$ for some $\theta_r$. We consider two cases: $\rce_0$ and $\rce_1$ entropies. $H_0(Z)\leq \theta_0$ is equivalent to upper bounding the support size of $Z$. $H_1(Z)\leq \theta_1$ upper bounds the Shannon entropy of $Z$. In general, $H_1(Z)\leq\theta$ can be seen as a relaxation of $H_0(Z)\leq \theta$ as the latter implies the former but not vice verse. Accordingly, we show stronger results for $\rce_0$, whereas we leave the most general identifiability result of $\rce_1$ as a conjecture. We also quantify how small the confounder's \renyi entropy should be for  identifiability.

Note that bounding the \renyi entropy of the latent confounder in the latent graph bounds the \renyi common entropy of the observed variables. Therefore, in order to distinguish the latent graph from the triangle and direct graphs, we need to obtain lower bounds on the \renyi common entropy of a typical pair $X,Y$ when data is generated from the triangle or direct graphs. 

We first establish bounds on the $\rce_0$ common entropy for the triangle and direct graphs, which hold for almost all parametrizations. To measure the fraction of causal models for which our bound is valid, we use a natural probability measure on the set of joint distributions by sampling each conditional distribution uniformly randomly from the probability simplex:
\begin{definition}[Uniform generative model (UGM)]
	\label{def:ugm}
	For any causal graph, consider the following generative model for the joint distribution $p(x_1,x_2,\hdots)=\prod_i p(x_i|pa_i)$, where $X_i\in \mathcal{X}_i,\forall i$: For all $i$ and $pa_i$, let the conditional distribution $p(X_i|pa_i)$ be sampled independently and uniformly randomly from the probability simplex in $\lvert\mathcal{X}_i\rvert$ dimensions. 
\end{definition}	

The following theorem uses the measure induced by UGM to show that for almost all distributions obtained from the triangle or direct graph, $\rce_0$ common entropy of the observed variables is large.
\begin{theorem}
	\label{thm:identifiability}
	Consider the random variables $X,Y,Z$ with supports $[m],[n],[k]$, respectively. Let $p(x,y,z)$ be a pmf sampled from the triangle or the direct graphs according to UGM. Then with probability 1, $G_0(X,Y) = \log(\min\{m,n\})$.
\end{theorem}
Now consider the latent graph where $Z$ is the true confounder. We clearly have that $G_0(X,Y)\leq H_0(Z)$ since $Z$ indeed makes $X,Y$ conditionally independent. In other words, $G_0(X,Y)$ is upper bounded in the latent graph whereas it is lower bounded in the triangle and the direct graphs by Theorem \ref{thm:identifiability}. Therefore, as long as the correlation cannot be explained by the mediator graph, $G_0(X,Y)$ can be used as a parameter to identify the latent graph. In order to formalize this identifiability statement, we need two assumptions with parameters $(r,\theta)$:
\begin{assumption}[$r,\theta$]
	\label{ass:low_entropy_confounder}
	Consider any causal model with observed variables $X,Y$. Let $Z$ represent the variable that captures all latent confounders between $X,Y$. Then $H_r(Z)< \theta$.  
\end{assumption}
\begin{assumption}[$r,\theta$]
	\label{ass:no_highentropy_mediator}
	Consider a causal model where $X$ causes $Y$. If $X$ causes $Y$ only through a latent mediator $Z$, i.e., $X\rightarrow Z\rightarrow Y$, then $H_r(Z)\geq \theta$.
\end{assumption}
Assumption \ref{ass:low_entropy_confounder} states that the collection of latent confounders, represented by $Z$, has to be "simple", which is quantified by its \renyi entropy. This assumption can also be interpreted as relaxing the causal sufficiency assumption by allowing \emph{weak} confounding. Assumption \ref{ass:no_highentropy_mediator} states that if the correlation is induced only due to a mediator, this mediator cannot have low \renyi entropy. Even though this assumption might seem restrictive, we provide evidence on both real and synthetic data in Section \ref{sim:mediator} to show it indeed often holds in practice. We have the following corollary: 
\begin{corollary}
	\label{cor:cardinality}
	Consider the random variables $X,Y$ with supports $[m],[n]$, respectively. For $\theta\!=\!\log(\min\{m,n\})$, latent graph can be identified with probability $1$ under UGM, Assumption \ref{ass:low_entropy_confounder},\ref{ass:no_highentropy_mediator}($0,\theta$).
\end{corollary}

Corollary \ref{cor:cardinality} indicates that, when the latent confounder has less than $\min\{m,n\}$ number of states, we can infer that the true causal graph is the latent graph from observational data under Assumptions \ref{ass:low_entropy_confounder} and \ref{ass:no_highentropy_mediator}. However, using common entropy, we cannot distinguish triangle graph from the direct graph. Also note that the identifiability result holds for \emph{almost all} parametrizations of these graphs, i.e., the set of parameters where it does not hold has Lebesgue measure zero.  

Next, we investigate if $\rce_1$ common entropy can be used for the same goal. Finding $\rce_1$ common entropy in general is  challenging. For binary $X,Y$ we can use the analytical expression of \cite{kumar2014exact} to show that $G_1(X,Y)$ is almost always larger than $H(Z)$ asymptotically for the triangle graph:
\begin{theorem}
	\label{thm:binary_identifiability}
	Consider the random variables $X,Y,Z$ with supports $[2],[2],[k]$, respectively. Let $p(x,y,z)$ be a pmf sampled from the triangle graph according to UGM except $p(z)$, which can be arbitrary. Then $\!\!\!\!\lim\limits_{H(Z)\rightarrow 0}\!\!\!\! \mathbb{P}(G_1(X,Y)\!>\! H(Z))\!\!=\!\!1$, where $\mathbb{P}$ is the probability measure induced by UGM.	
\end{theorem}
\begin{algorithm}[t]
	\small
	\caption{InferGraph: Identifying the Latent Graph}
	\begin{algorithmic}[1]
		\STATE \textbf{Input:} $k:$ Support size of $Z$, $p(x,y)$, $T: I(X;Y|Z)$ threshold, $\{\beta_i\}_{i\in [N]}$, $\theta: H(Z)$ threshold.
		\STATE Randomly initialize $N$ distributions $q_0^{(i)}(z|x,y),i\in[N]$.
		\FOR{$i\in [N]$}
		\STATE $q^{(i)}(x,y,z) \leftarrow$ {\em LatentSearch}($q_0^{(i)}(z|x,y),\beta_i$).
		\STATE Calculate $I^{(i)}(X;Y|Z)$ and $H^{(i)}(Z)$ from $q^{(i)}(x,y,z)$.
		\ENDFOR 
		\STATE $S = \{i:I^{(i)}(X;Y|Z)\leq T\}$.
		\IF{$\min(\{H^{(i)}(Z):i\in S\})>\theta$ or $S=\emptyset$}
		\STATE \textbf{return} Triangle or Direct Graph 
		\ELSE
		\STATE \textbf{return} Latent Graph
		\ENDIF
	\end{algorithmic}
	\label{causal_inference_algorithm}
\end{algorithm}
In Section \ref{app:binary}, we provide simulations for binary and ternary $Z$ to demonstrate the behavior for small non-zero $H(Z)$. Then, we have the following asymptotic identifiability result using common entropy:
\begin{corollary}
	\label{cor:binary}
	For binary $X,Y$ under UGM, Assumption \ref{ass:low_entropy_confounder},\ref{ass:no_highentropy_mediator}($1,\theta$) if the entropy upper bound $\theta$ is known, the fraction of causal models for which latent graph can identified goes to $1$ as $\theta$ goes to $0$.
\end{corollary}

For the general case, we conjecture that when the data is sampled from the triangle or the direct graphs, $G_1(X,Y)$ scales with $\min\{H(X),H(Y)\}$. 
\begin{conjecture}
	\label{conjecture}
	Consider the random variables $X,Y,Z$ with supports $[m],[n],[k]$, respectively. Let $p(x,y,z)$ be a pmf sampled from the triangle or direct graphs according to UGM except $p(z)$, which can be arbitrary. Then, there exists a constant $\alpha\!=\!\Theta(1)$ such that with probability $1-\left(\min\{m,n\}\right)^{-c}$ $G_1(X,Y)\!>\!\alpha\min\{H(X),H(Y)\}$ for some constant $c=c(\alpha)$.
\end{conjecture}
According to Conjecture \ref{conjecture}, we expect that for most of the parametrizations of the triangle and direct graphs, common entropy of the observed variables should be lower-bounded by the entropies of the observed variables, up to a scaling by a constant. It is easy to see that under assumptions similar to those in Corollaries \ref{cor:cardinality} and \ref{cor:binary}, Conjecture \ref{conjecture} implies identifiability of the latent graph. In Section \ref{sec:experiments}, we conduct experiments to support the conjecture and identify $\alpha$. We conclude this section by formalizing how \emph{LatentSearch} can be used in Algorithm \ref{causal_inference_algorithm}, under Assumption \ref{ass:low_entropy_confounder}$(1,\theta)$, \ref{ass:no_highentropy_mediator}$(1,\theta)$. Conjecture \ref{conjecture} suggests that, in Algorithm \ref{causal_inference_algorithm}, we can set $\theta\!=\!\alpha\min\{H(X),H(Y)\}$ for some $\alpha<1$. 
\begin{algorithm}[t]
	\caption{EntropicPC (for $F=False$) and EntropicPC-C (for $F=True$)}
	\begin{algorithmic}[1]
		\small
		\STATE \textbf{Input:} CI Oracle $\mathcal{C}$ for $\mathcal{V}=\{X_1,\hdots X_n\}$. Common entropy oracle $\mathcal{B}$. Entropy oracle $H$. Flag $F$.
		\STATE Form the complete undirected graph $D=(\mathcal{V},\mathcal{E})$ on node set $\mathcal{V}$.
		\STATE $l\leftarrow -1$. $maysep(X_i,X_j)\leftarrow True, \forall i,j $.
		\WHILE{$\exists i,j$ s.t. $(X_i,X_j)\in\mathcal{E}$ \textbf{and} $\lvert adj_D(X_i)\backslash \{X_j\}\rvert>l$ \textbf{and} $maysep(X_i,X_j)=True$.}
		\STATE $l\leftarrow l+1$
		\FOR{All $i,j$ s.t. $(X_i,X_j)\in \mathcal{E}$ and $\lvert adj_D(X_i)\backslash \{X_j\}\rvert\geq l$}
		\WHILE{$(X_i,X_j)\in \mathcal{E}$ and $\exists S\subseteq adj_D(X_i)\backslash \{X_j\}$ s.t. $\lvert S\rvert=l $ \textbf{and} $maysep(X_i,X_j)=True$.}
		\STATE Pick a new $S\subseteq adj_D(X_i)\backslash \{X_j\}$ s.t. $\lvert S\rvert=l $. 
		\IF{$\mathcal{C}(X,Y|Z)=True$} 
		\IF{ $H(S)\geq\mathcal{B}(X_i,X_j)$ }
		\STATE $\mathcal{E}\leftarrow \mathcal{E} - \{(X_i,X_j)\}$. 
		\STATE $sepset(X_i,X_j)\leftarrow S$.
		\ELSIF{$F=True$}
		\STATE $maysep(X_i,X_j)\leftarrow False$
		\ELSE
		\IF{$\mathcal{B}(X_i,X_j)\geq 0.8\min\{H(X_i),H(X_j)\}$}
		\STATE $maysep(X_i,X_j)\leftarrow False$
		\ENDIF
		\ENDIF
		\ENDIF
		\ENDWHILE
		\ENDFOR
		\ENDWHILE
		\STATE Orient unshielded colliders according to separating sets $\{sepset(X_i,X_j)\}_{i,j\in [n]}$ \cite{colombo2014order}.
		\STATE Orient as many of the remaining undirected edges as possible by  repeatedly applying the Meek rules \cite{Meek1995a}.
		\STATE \textbf{Return:} $D=(\mathcal{V},\mathcal{E})$.
	\end{algorithmic}
	\label{entropicPC}
\end{algorithm}
\section{Entropic Constraint-Based Causal Discovery}
\label{sec:entropicPC}
A causal graph imposes certain CI relations in the data. Constraint-based causal discovery methods utilize CI statements to reverse-engineer the underlying causal graph. Consider a causal graph over a set $\mathcal{V}$ of observed variables. Constraint-based methods identify a set $S_{X,Y}\subset\mathcal{V}$ as a \emph{separating set} for every pair $X,Y$ if the CI statement $\ci{X}{Y}{S_{X,Y}}$ holds in the data. Starting with a complete graph, edges between pairs are removed if they are separable by some set. Separating sets are later used to orient parts of the graph, which is followed by a set of orientation rules~\cite{Spirtes2001,zhang2008completeness} . 

Despite being grounded theoretically in the large sample limit, in practice these methods require a large number of samples and are very sensitive to noise: An incorrect CI statement early on might lead to a drastically different causal graph at the end due to their sequential nature. Another issue is that the distribution should be faithful to the graph, i.e., any connected pair should be dependent~\cite{Spirtes2001,koller2009probabilistic}.

To help alleviate some of these issues, we propose a simple modification to the existing constraint-based learning algorithms using common entropy. Our key observation is that the common entropy of two variables provide an information-theoretic lower bound on the entropy of \emph{any} separating set. In other words, common entropy provides us with a necessary condition for a set $S_{X,Y}$ to be a valid separating set: $\ci{X}{Y}{S_{X,Y}}$ only if $H(S_{X,Y})\!\geq\! G_1(X,Y)$. Accordingly, we can modify any constraint-based method to ensure this condition. We only lay out our modifications on the PC algorithm. It can be trivially applied to  other methods such as modern variants of PC and FCI. 

We propose two versions: \emph{EntropicPC} and the conservative version \emph{EntropicPC-C}. In both, $S_{X,Y}$ is accepted only if $H(S_{X,Y})\!\geq\! G_1(X,Y)$. The difference is how they handle pairs $X,Y$ that are deemed CI despite that $H(S_{X,Y})\!<\! G_1(X,Y)$. EntropicPC-C concludes that the data for $X,Y$ is unreliable and simply declares them non-separable by any set. EntropicPC only does this when common entropy is large, i.e., $G_1(X,Y)\!\geq\! 0.8\min\{H(X),H(Y)\}$; otherwise it searches for another set that may satisfy $H(S_{X,Y})\!\geq\! G_1(X,Y)$. $0.8$ is chosen based on our experiments in Section \ref{sec:experiments}. 

We provide the pseudo-code in Algorithm \ref{entropicPC}.  
It is easy to see that both algorithms are 
sound in the sample limit. 
The case of $S\!=\!\emptyset$ in \textbf{line} $\mathbf{10}$ is of special interest. 
Setting $H(\emptyset)\!=\!0$ is not reasonable with finitely many  
samples since the 
common entropy of independent variables will not be estimated as exactly zero. 
To address this, in simulations $H(\emptyset)$ is set to $0.1\min\{H(X),H(Y)\}$ in line $10$. This and the choice of $0.8$ as the coefficient in \textbf{line} $\mathbf{16}$ can be seen as hyper-parameters to be tuned. 

\section{Experiments}
\label{sec:experiments}
\subsection{Performance of LatentSearch} 
\label{sim:latentsearch}
We evaluate how well \emph{LatentSearch} performs by generating data from the latent graph and comparing the entropy it recovers with the entropy of the true confounder $Z$ in Figure \ref{fig:latentsearch_performance}. In the generated data, we ensure $H(Z)$ is bounded above by $1$ for all $n$. This makes the task harder for the algorithm for larger $n$. The left axis shows the fraction of times \emph{LatentSearch} recovers a latent with entropy smaller than $H(Z)$. The right axis shows the worst-case performance in terms of the entropy gap between the algorithm output and true entropy. We generated $100$ random distributions for each $n$. The same number of iterations is used for the algorithm for all $n$. As expected, performance slowly degrades as $n$ is increased, since $H(Z)<1,\forall n$. We conclude that \emph{LatentSearch} performs well within the range of $n$ values we use in this paper. Further research is needed to make \emph{LatentSearch} adapt to $n$.

\subsection{Validating Assumption \ref{ass:no_highentropy_mediator}: No Low-Entropy Mediator }
\label{sim:mediator}
We conducted synthetic and real experiments to validate the assumption that, in practice, it is unlikely for cause-effect pairs to only have low-entropy mediators. First, in Figure \ref{fig:mediator_synthetic} we generated data from $X\!\rightarrow\!Z\!\rightarrow\! Y$ and evaluated $H(Z)$. $p(X)$ is sampled uniformly from the probability simplex. $p(Z|x), \forall x$ are sampled from Dirichlet with parameter $\alpha_{Dir}$. It is observed that mediator entropy scales with $\log_2(n)$ for all cases. This supports Assumption \ref{ass:no_highentropy_mediator} by asserting that for most causal models, unless the mediator has a constant number of states, its entropy is close to $H(X),H(Y)$.

Second, in Figure \ref{fig:tuebingen}, we run \emph{LatentSearch} on the real cause-effect pairs from Tuebingen dataset \cite{mooij2016distinguishing}. Our goal is to test if the causation can be solely due to low-entropy mediators: If it is, then common entropy should be small since mediator can make the observed variables conditionally independent. We used different thresholds for conditional mutual information for declaring two variables conditionally independent. Investigating typical $I-H$ plots for this dataset (see (b)), we conclude that $0.001$ is a suitable CMI threshold for this dataset. From the empirical cdf of $\alpha\coloneqq \frac{G_1(X,Y)}{\min\{H(X),H(Y)\}}$ across the dataset, we identified that for most pairs $G_1(X,Y) \geq 0.8 \min\{H(X),H(Y)\}$. This indicates that if the causation is solely due to a mediator, it must have entropy of at least $0.8\min\{H(X),H(Y)\}$.
\begin{figure}[t!]
	\centering
	\begin{subfigure}[b]{0.32\textwidth}
		\includegraphics[width=\textwidth]{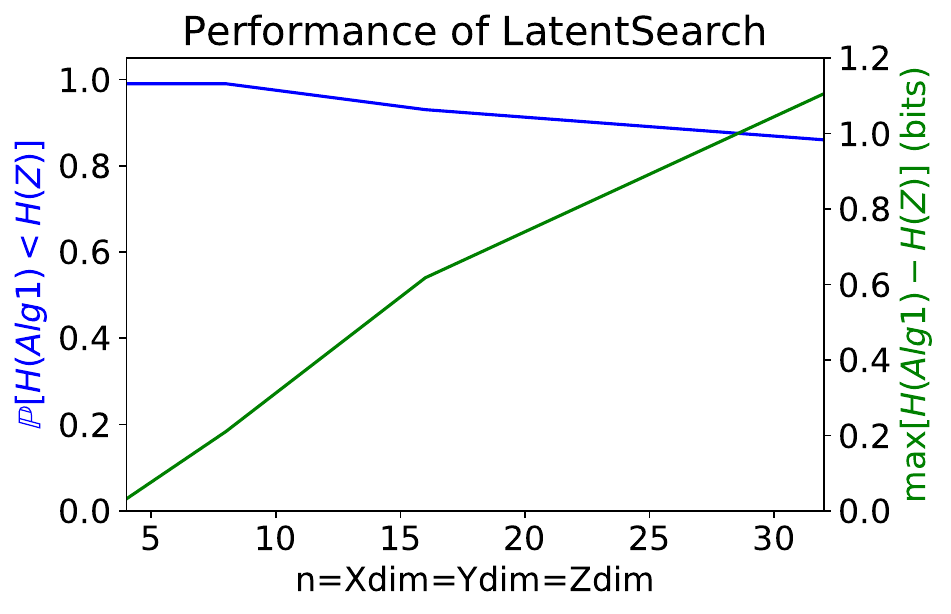}
		\caption{}
		\label{fig:latentsearch_performance}
	\end{subfigure}
	\begin{subfigure}[b]{0.29\textwidth}
		\includegraphics[width=\textwidth]{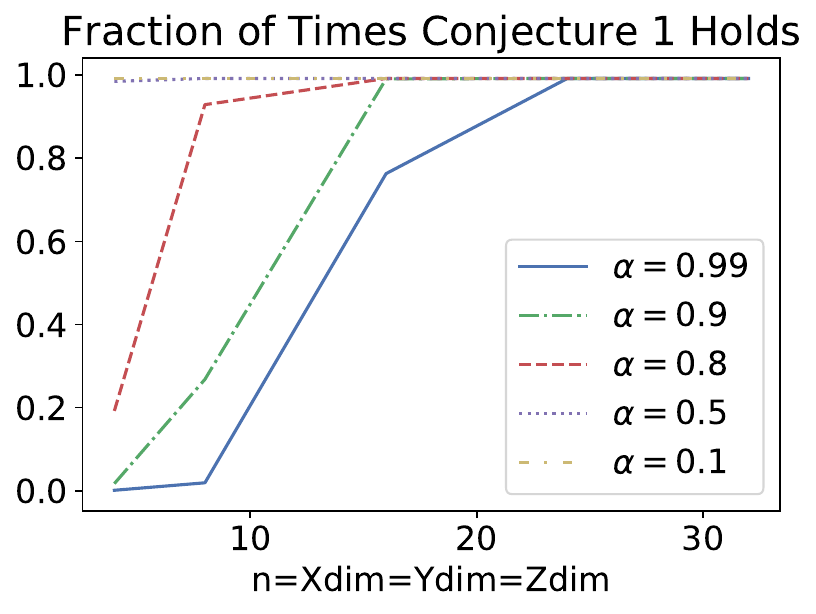}
		\caption{}
		\label{fig:synthetic_conjecture}
	\end{subfigure}
	\begin{subfigure}[b]{0.3\textwidth}
		\includegraphics[width=\textwidth]{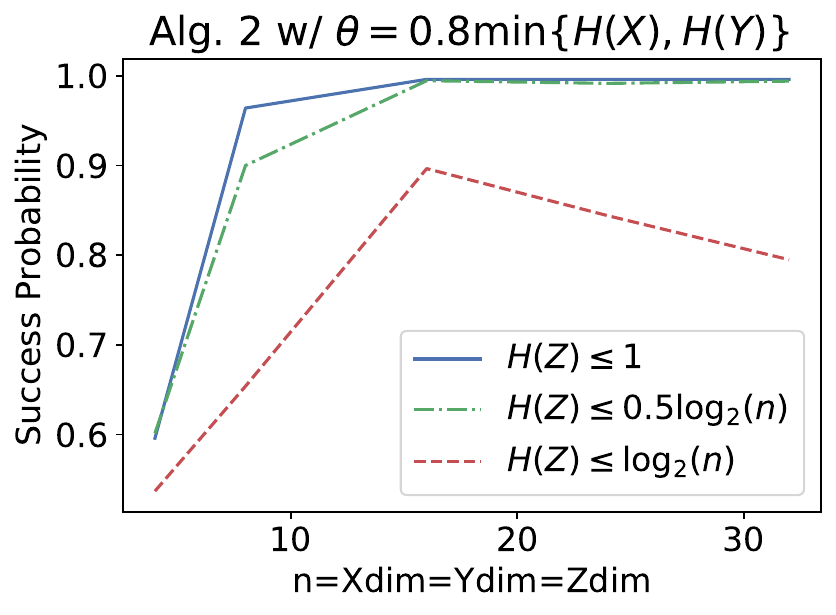}
		\caption{}
		\label{fig:synthetic_identifiability}
	\end{subfigure}
	\caption{\textbf{(a)} [Section \ref{sim:latentsearch}] Performance of \emph{LatentSearch} (Alg. 1) on synthetic data.  \textbf{(b)} [Section \ref{sim:conjecture}] Fraction of times Conjecture \ref{conjecture} holds in synthetic data for different  values of 
		$\alpha$.  \textbf{(c)} [Section \ref{sim:bivariate}] Performance of Algorithm \ref{causal_inference_algorithm} on synthetic data for different confounder entropies.}
	\label{fig:first_panel}
\end{figure}

\begin{figure}[t!]
	\centering
	\begin{subfigure}[b]{0.31\textwidth}
		\includegraphics[width=\textwidth]{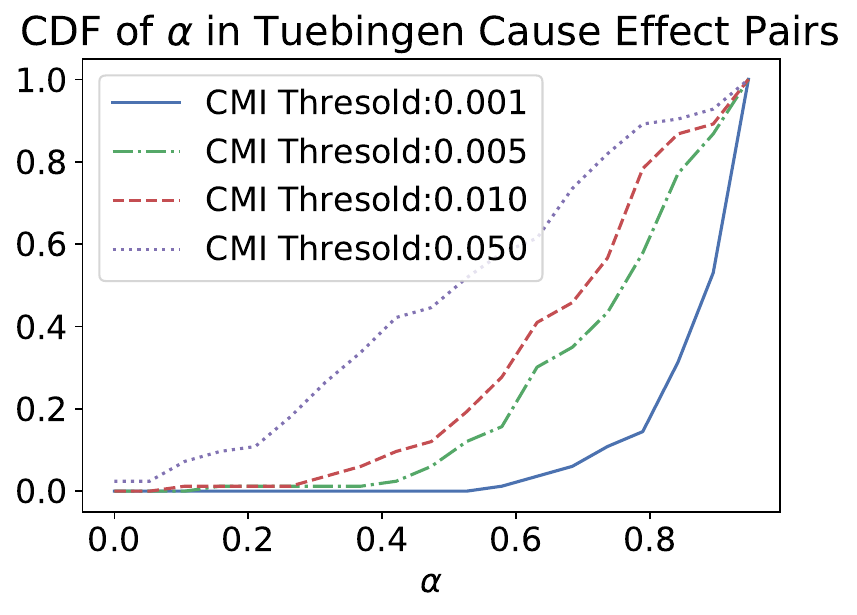}
		\caption{ }
		\label{fig:tuebingen}
	\end{subfigure}
	\begin{subfigure}[b]{0.31\linewidth}
		\includegraphics[width=\textwidth]{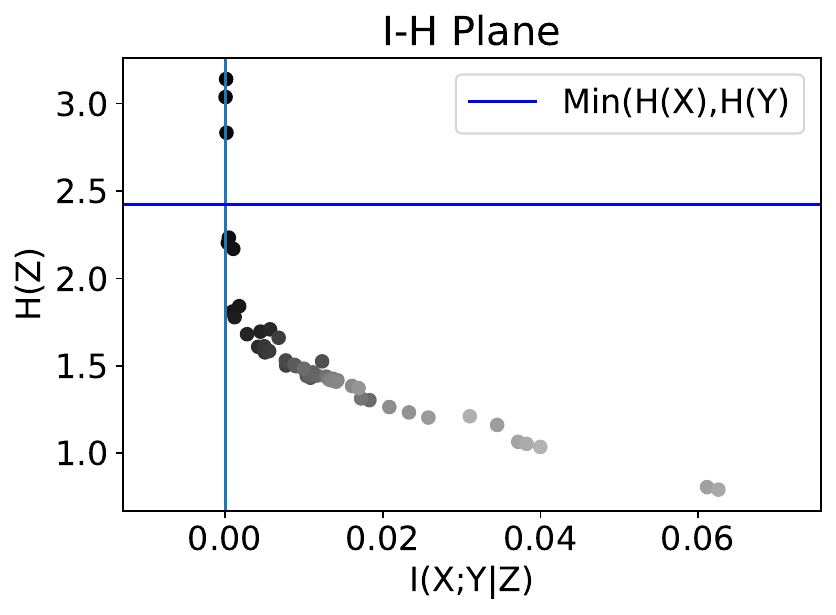}
		\caption{}
		\label{fig:IHplane}
	\end{subfigure}
	\begin{subfigure}[b]{0.31\textwidth}
		\includegraphics[width=\textwidth]{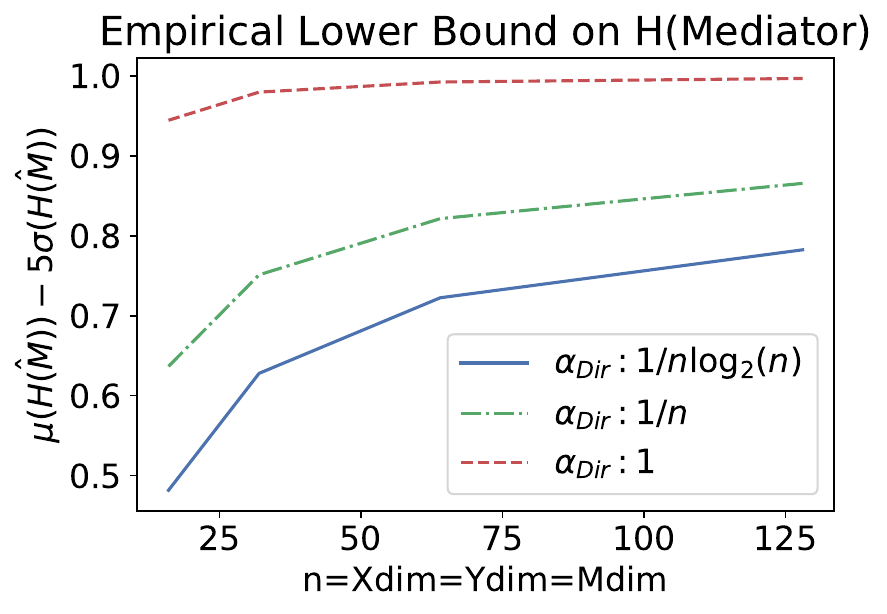}
		\caption{ }
		\label{fig:mediator_synthetic}
	\end{subfigure}
	\caption{\textbf{(a)} [Section \ref{sim:conjecture},\ref{sim:mediator}] Empirical cumulative distribution function for $\alpha$ from Conjecture \ref{conjecture} from Tuebingen Data for various conditional mutual information thresholds. 
		\textbf{(b)} Tradeoff curve discovered by \emph{LatentSearch} for pair 7 of Tuebingen. Each point is the output of \emph{LatentSearch} for a different value of $\beta$. The sharp elbow around $I(X;Y|Z)=0$ hints that the suitable conditional mutual information threshold for this pair is $\approx 0.001$. \textbf{(c)} Entropy of mediator in synthetic data. $y-$axis shows the mean minus $5$ times the st-dev. of entropy as a fraction of $\log_2(n)$ as an empirical lower bound. Results show that, for almost all models, entropy of the mediator scales with $\log_2(n)$.}
	\label{fig:second_panel}
\end{figure}
\subsection{Validating Conjecture \ref{conjecture}} 
\label{sim:conjecture}
In Figure \ref{fig:synthetic_conjecture}, for each $n$ we sample $1000$ distributions from the triangle graph with $H(Z)\leq 1$. Even with a $1$ bit latent, we observe that $G_1(X,Y)$ scales with $\min\{H(X),H(Y)\}$. In fact the result hints at the even stronger statement that for any $\alpha$, $\exists n_0(\alpha)$ where the conjecture holds $\forall n\geq n_0(\alpha)$. 

Experiments with the Tuebingen data, as explained in Section \ref{sim:mediator} and Figure \ref{fig:tuebingen}, similarly supports Conjecture \ref{conjecture}: If a pair is causally related, common entropy typically scales with $\min\{H(X),H(Y)\}$.  

\subsection{Identifiability via Algorithm \ref{causal_inference_algorithm}}
\label{sim:bivariate}
In Figure \ref{fig:synthetic_identifiability} we uniformly mixed data from triangle and latent graphs with different entropy bounds on the confounder. Since in a practical problem, true $H(Z)$ will not be available, we used $0.8\min\{H(X),H(Y)\}$ as the entropy threshold $\theta$ in Algorithm \ref{causal_inference_algorithm}. The results indicate that even if the true $H(Z)$ scales with $n$ (e.g., $0.5\log_2(n)$), we can still distinguish latent from triangle graph. $\leq$ can be interpreted as $\approx$ due to our sampling method of $p(z)$, as detailed in Section \ref{app:sim_details}. 

\subsection{Evaluating EntropicPC}
\label{sim:EntropicPC}
In this section, to illustrate the performance improvement provided by using common entropy, we compare the order-independent version of PC algorithm~\cite{colombo2014order} with our proposed modification EntropicPC and its conservative version EntropicPC-C as described in Algorithm \ref{entropicPC} on both synthetic graphs and data generated from them and on ADULT dataset. We use \emph{pcalg} package in R~\cite{pcalg1,pcalg2} 
\begin{figure}[t]
	\centering
	\begin{subfigure}[b]{0.3\textwidth}
		\includegraphics[width=\textwidth]{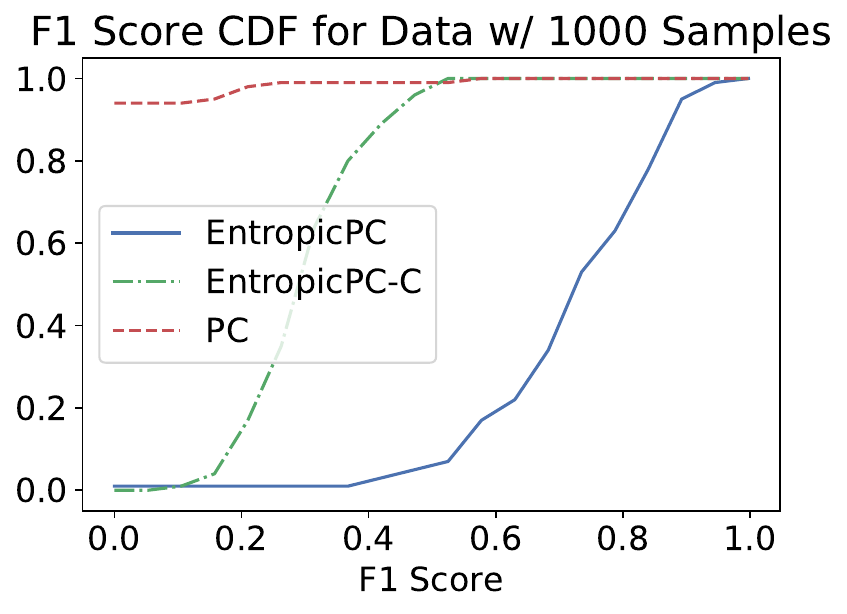}
		\caption{ }
		\label{fig:F1_n10N1000_cum}
	\end{subfigure}
	\begin{subfigure}[b]{0.3\textwidth}
		\includegraphics[width=\textwidth]{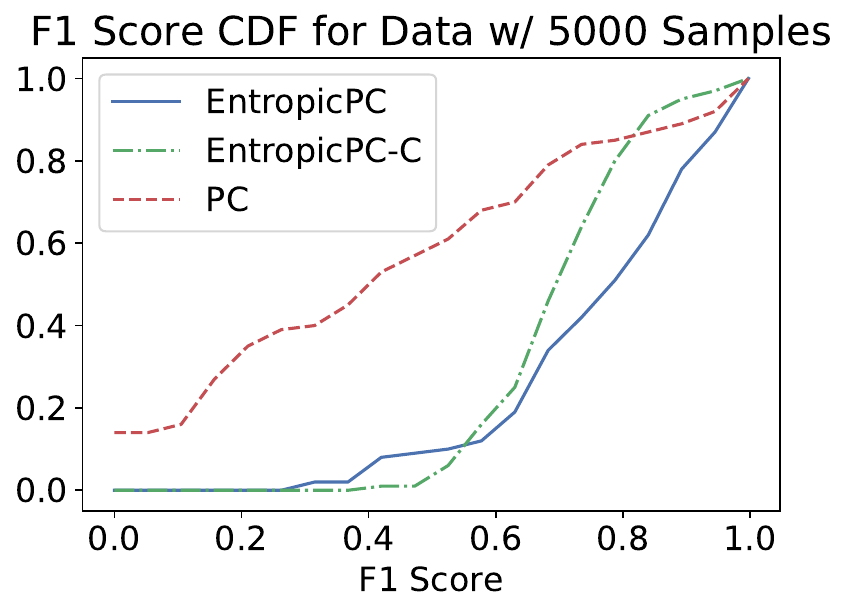}
		\caption{ }
		\label{fig:F1_n10N5000_cum}
	\end{subfigure}
	\begin{subfigure}[b]{0.3\textwidth}
		\includegraphics[width=\textwidth]{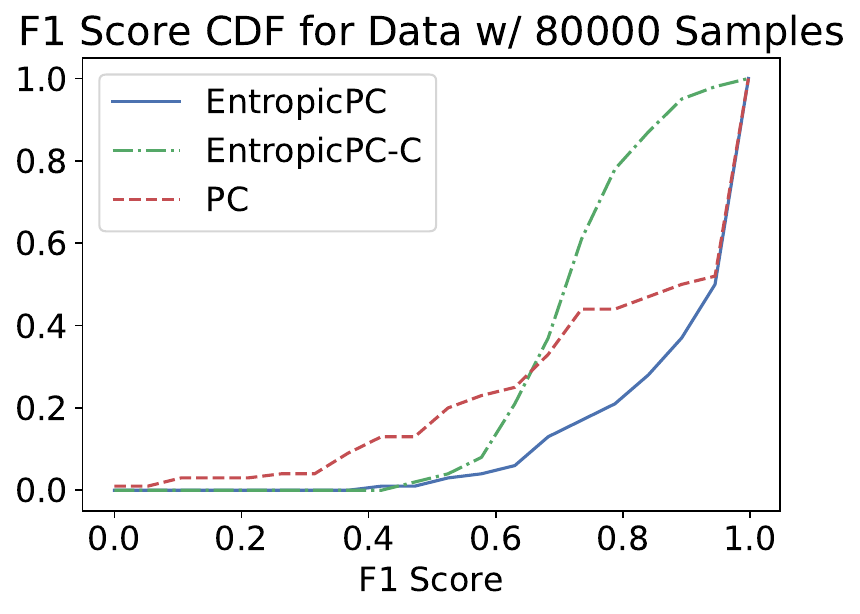}
		\caption{ }
		\label{fig:F1_n10N80000_cum}
	\end{subfigure}
	\begin{subfigure}[b]{0.3\textwidth}
		\includegraphics[width=\textwidth]{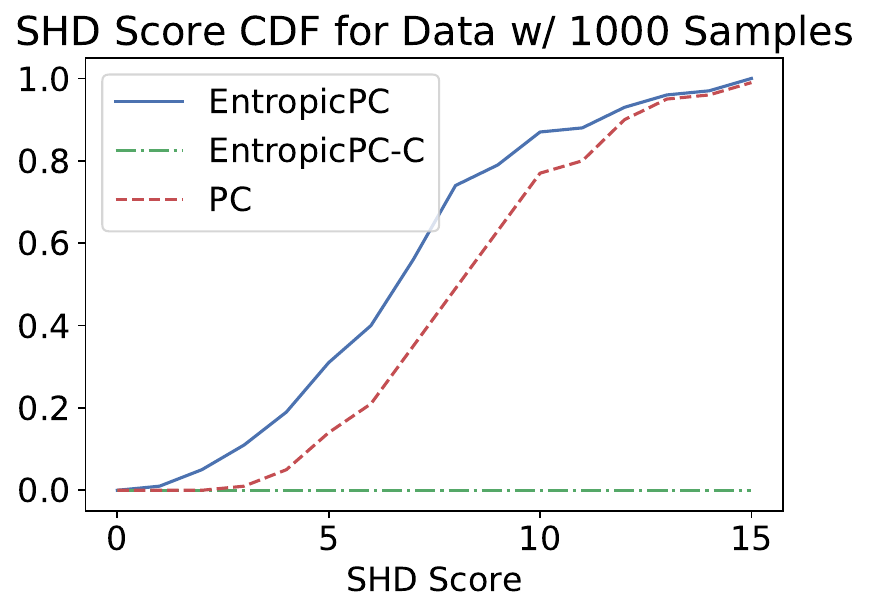}
		\caption{ }
		\label{fig:SHD_n10N1000_cum}
	\end{subfigure}
	\begin{subfigure}[b]{0.3\textwidth}
		\includegraphics[width=\textwidth]{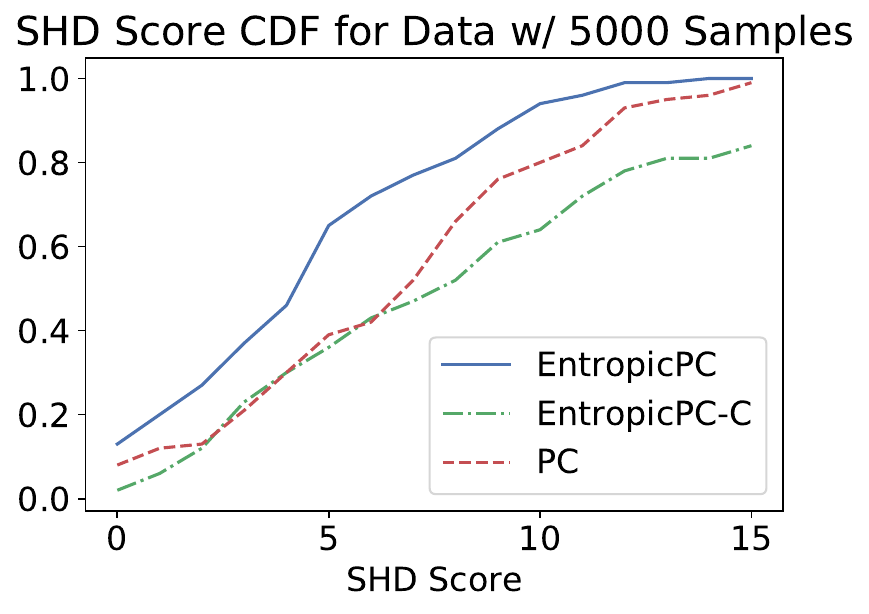}
		\caption{ }
		\label{fig:SHD_n10N5000_cum}
	\end{subfigure}
	\begin{subfigure}[b]{0.3\textwidth}
		\includegraphics[width=\textwidth]{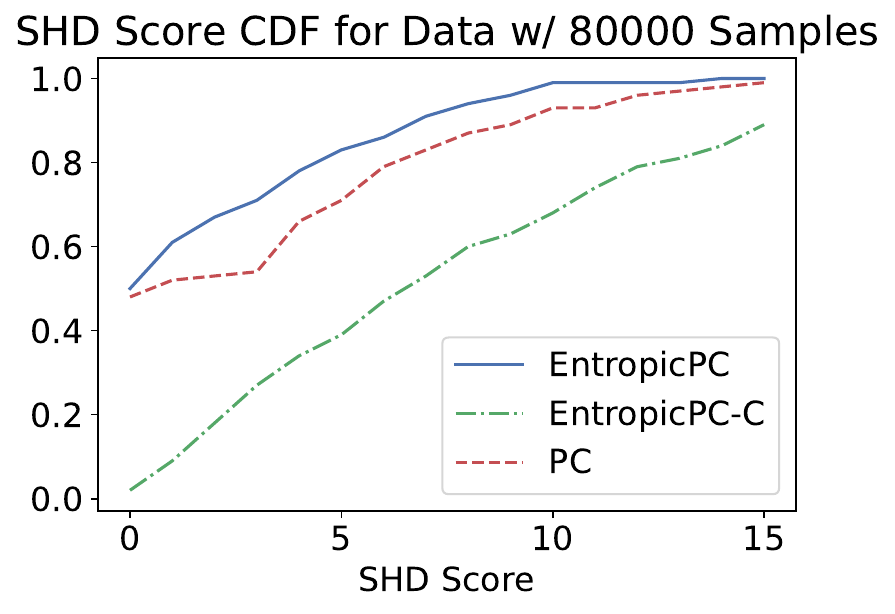}
		\caption{ }
		\label{fig:SHD_n10N80000_cum}
	\end{subfigure}
	\caption{[Section \ref{sim:EntropicPC}] Performance of EntropicPC, EntropicPC-C and PC in synthetic data.}
	\label{fig:entropicpc_synthetic}
\end{figure}

\textbf{Synthetic Graphs/Data: } In Figure \ref{fig:entropicpc_synthetic} we randomly sampled joint distributions from $100$ Erd\"{o}s-\renyi graphs on $10$ nodes (see Section \ref{sec:appEntropicPC} for $3$-node graphs) each with $8$ states, and obtained datasets with $1k,5k,80k$ samples. $80k$ is chosen to mimic the infinite-sample regime where we expect identical performance with PC. \textbf{(a), (b), (c)} shows the empirical cumulative distribution function (CDF) of F1 score for identifying edges in either direction correctly (skeleton). Since $F_1\!=\!1$ is ideal, the best-case CDF is a step function at $1$. Accordingly, lower the CDF curve the better. \textbf{(d), (e), (f)} shows the empirical CDF of Structural Hamming Distance (SHD)~\cite{tsamardinos2006max} between the true essential graphs, and the output of each algorithm using the same synthetic data. Since $SHD\!=\!0$ is ideal, the best-case CDF is a step function at $0$, Accordingly, higher the CDF curve the better. EntropicPC provides very significant improvements to skeleton discovery as indicated by the F1 score and moderate improvement to identifying the true essential graph as indicated by SHD in the small-sample regime.  

\textbf{On ADULT Dataset: } We compare the performance of EntropicPC with the baseline PC algorithm that we used for our modifications. Both causal graphs are shown in Figure \ref{fig:theadult}. Note that the bidirected edges represent undirected, i.e., unoriented edges.

Entropic PC identifies the following additional edges that are missed by the PC algorithm: It discovers that \emph{Marital Status, Relationship, Education, Occupation} causes \emph{Salary}. Even though there is no ground truth, at the very least, we expect \emph{Education} and \emph{Occupation} to be causes of \emph{Salary}, whereas PC outputs \emph{Salary} as an isolated node, implying it is not caused by any of the variables. We believe the reason is that for every neighbor, there is some conditioning set and a configuration with very few number of samples. This can easily be interpreted as conditional independence by a CI tester. EntropicPC alleviates this problem by concluding that there is significant dependence between these variables by checking their common entropy. From both real and synthetic simulations, we conclude that common entropy is more robust to small number of samples than CI testing.

Both algorithms seems to suffer from unfaithful data - \emph{sex} is not required to separate \emph{marital-status} and \emph{occupation} whereas we expect it to since it should be a source node. This drives both algorithms to orient \emph{sex} as a collider. 
\begin{figure}[t]
	\vskip 0.2in
	\centering
	\begin{subfigure}[b]{0.4\textwidth}
		\includegraphics[width=\textwidth]{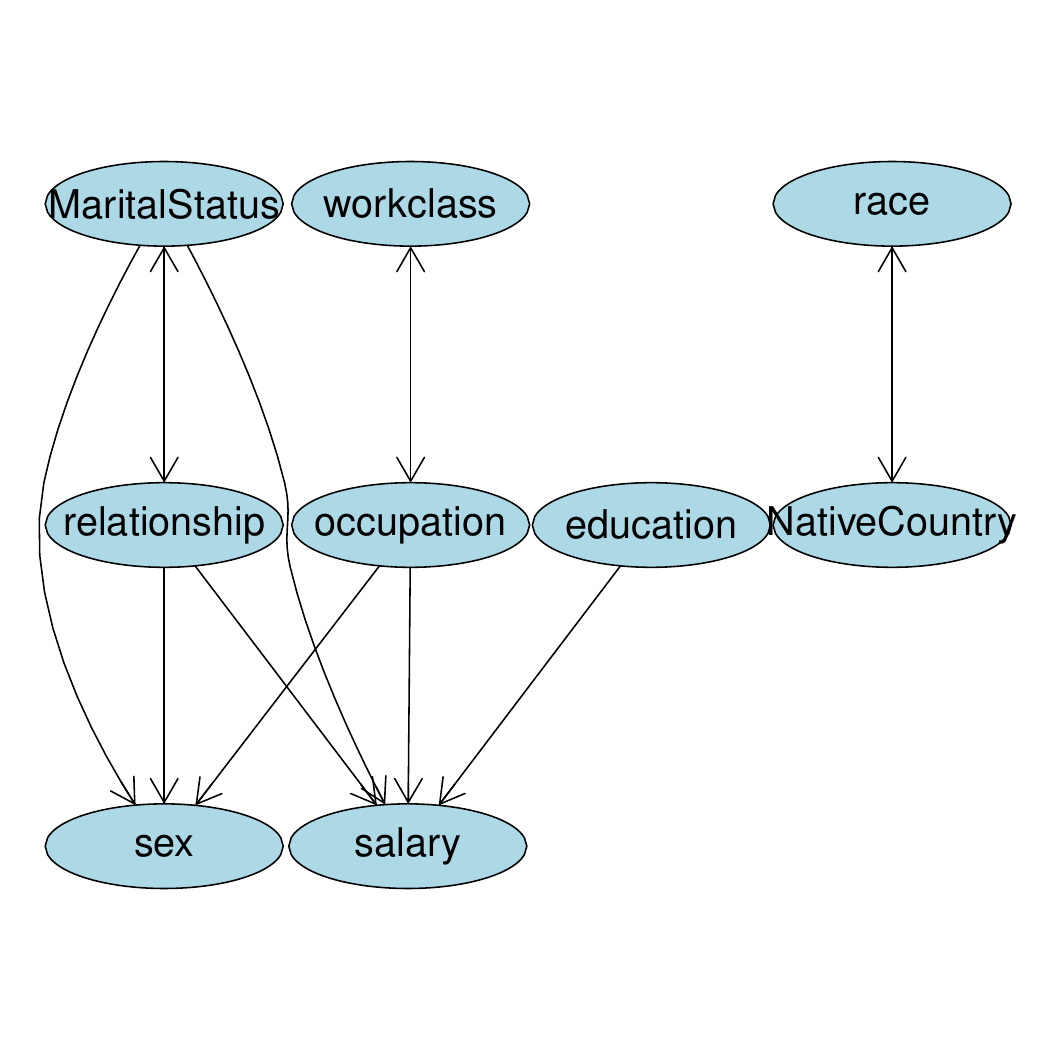}
		\caption{ }
		\label{fig:adult_entropicPC}
	\end{subfigure}
	\hspace{0.2in}
	\begin{subfigure}[b]{0.4\textwidth}
		\includegraphics[width=\textwidth]{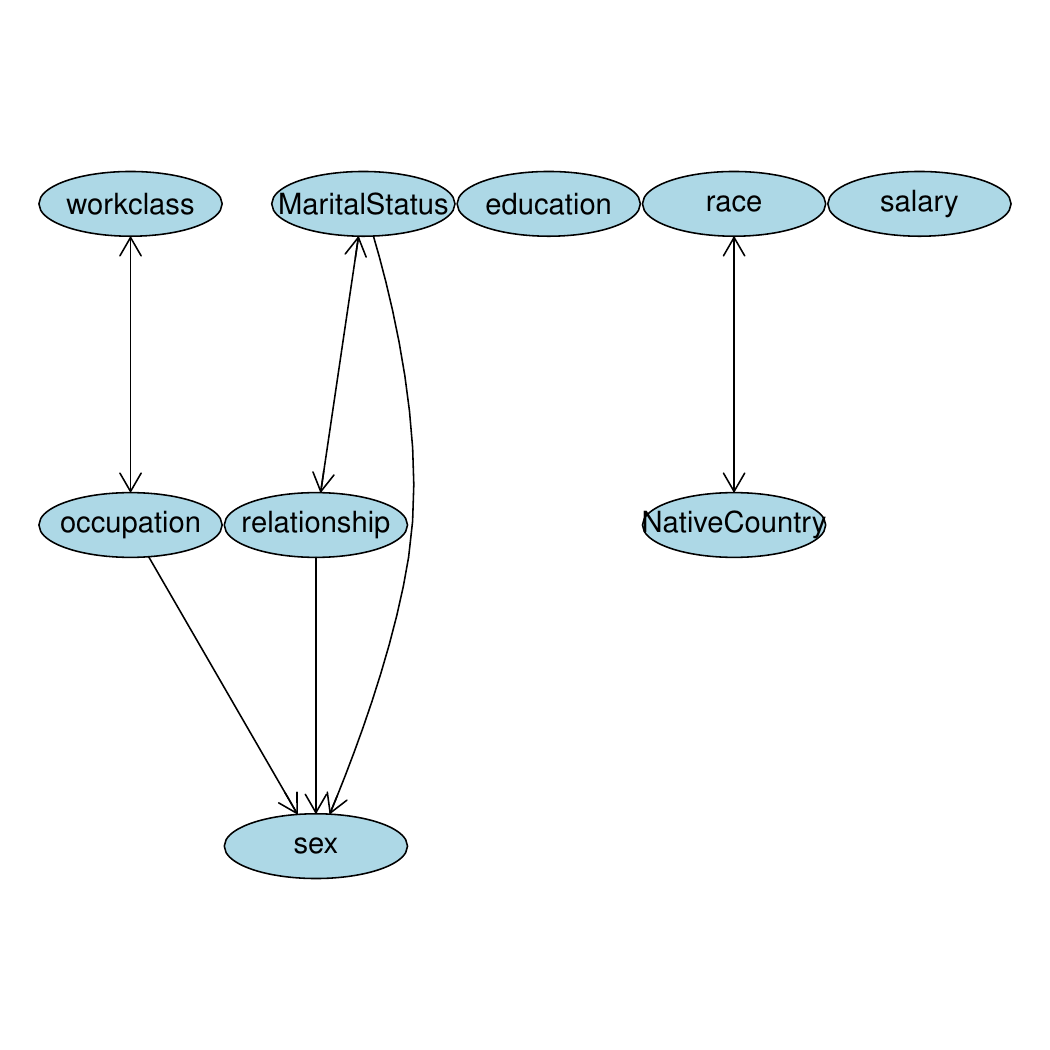}
		\caption{ }
		\label{fig:adult_PC}
	\end{subfigure}
	\caption{[Section \ref{sim:EntropicPC}] Output of \textbf{(a)} EntropicPC and \textbf{(b)} PC in ADULT Data. }
	\label{fig:theadult}
\end{figure}
\section{Conclusions}
In this paper, we showed that common entropy, an information-theoretic quantity, can be used for causal discovery. We introduced the notion of \renyi common entropy and proposed a practical algorithm for approximately calculating $\rce_1$ common entropy. Next, we showed theoretically that common entropy can be used to identify if the observed variables are causally related under certain assumptions. Finally, we showed that common entropy can be used to improve the existing constraint-based causal discovery algorithms. We evaluated our results on synthetic and real data.  

\section*{Acknowledgements}
This work was supported in part by NSF Grants SATC 1704778, CCF 1763702, 1934932, AF 1901292, 2008710, 2019844, 1731754, 1646522, 1609279, ONR, ARO W911NF-17-1-0359, research gifts by Western Digital, WNCG IAP, computing resources from TACC, the Archie Straiton Fellowship.
\bibliography{causalinferenceSimple}
\clearpage
\section{Appendix}
\subsection{Detailed Background}
\label{app:background}
Let $D=(\mathcal{V},\mathcal{E})$ be a directed acyclic graph on the set of vertices $\mathcal{V} = \{V_1,V_2,\hdots, V_n\}$ with directed edge set $\mathcal{E}$. Each directed edge $e_k\in \mathcal{E}$ is a tuple $e_k=(V_i,V_j)$. Let $\mathbb{P} $ be a joint distribution over a set of variables labeled by $\mathcal{V}$. $D$ is called a valid Bayesian network for the distribution $\mathbb{P}$, if $\mathbb{P}$ factorizes with respect to the graph $D$ as $\mathbb{P}(V_1,V_2,\hdots V_n) = \prod_{i} \mathbb{P}(V_i|pa_i)$, where $pa_i$ are the set of parents of vertex $V_i$ in graph $D$. In a Bayesian network $D$ that is valid for $\mathbb{P}$, if three vertices $X,Y,Z$ satisfy a graphical criterion called the \emph{d-separation} on $D$, then $\ci{X}{Y}{Z}$ in $\mathbb{P}$. The faithfulness assumption allows us to infer dependence relations based on d-separation: $\mathbb{P}$ is said to be faithful to graph $D$ when the following holds: Any three variables $X,Y,Z$ that are not d-separated are conditionally dependent, i.e., $\nci{X}{Y}{Z}$ in $\mathbb{P}$. 

Note that the edges in a Bayesian network do not carry a physical meaning: They simply indicate how a joint distribution can be factorized. \emph{Causal Bayesian networks} (or causal graphs)~\cite{Pearl2009}  on the other hand capture the causal relations between variables: They extend the notion of Bayesian networks to different experimental, the so called interventional settings. \emph{An intervention} is an experiment that changes the workings of the underlying system and sets the value of a variable, shown as $do(X=x)$. Causal Bayesian networks allow us to calculate the joint distributions under these experimantal conditions, called the interventional distributions\footnote{For a formal introduction to Pearl's framework please see \cite{Spirtes2001, Pearl2009}.}.  

In this paper, we work with the causal graphs given in Figure \ref{fig:causal_graphs}. From the d-separation principle, we see that the latent graph satisfies $\ci{X}{Y}{Z}$, whereas under the faithfulness condition, $\nci{X}{Y}{Z}$ in the triangle graph or the direct graph. Checking the existence of such a latent variable can help us recover the true causal graph as we discover in the next sections.  We work with discrete ordinal or categorical variables. Suppose the support sizes of the observed variables $X$ and $Y$ are $m$ and $n$, respectively. The joint distribution can be represented with an $m\times n$ non-negative matrix whose entries sum to $1$. We assume that we have access to this joint distribution. 

We use $[n]$ to represent the set $\{1,2,\hdots, n\}$ for any $n\in\mathbb{N}$. Capital letters represent random variables, lowercase letters represent realizations\footnote{In some proofs, $x_i$ is used to represent the probability that the variable $X$ takes the value $i$ for simplicity.}. Letters $X,Y$ are reserved for the observed variables, whereas $Z$ is used for the latent variable. To represent the probability mass function over three variables $X,Y,Z$, we use $p(x,y,z)\coloneqq \mathbb{P}(X = x, Y=y, Z=z)$ and similarly for any conditional $p(z|x,y)\coloneqq \mathbb{P}(Z=z | X=x,Y=y)$. For a function $q(x,y,z)$ that is understood to be a probability mass function, we use shorthand notation for marginals and conditionals such as $q(x,y)$ and $q(x|z)$ to represent the functions obtained from $q(x,y,z)$ via standard operations on probability distributions. Lowercase boldface letters are used for vectors and uppercase boldface letters are used for matrices. We also use $p(Z|x,y)$ to represent the conditional distribution $\mathbb{P}(Z|X=x,Y=y)$ (Similarly for $p(Z|x),p(Z|y)$). $\text{card}(X)$ stands for the support size of $X$. \renyi entropy of order $\alpha$ of a random variable $X$ is defined as $H_\alpha(X)=\frac{1}{1-\alpha}\log{\sum_i p_i^\alpha}$. \renyi entropy of order $0$ gives the support size of a random variable. It can be shown that in the limit as $\alpha\rightarrow 1$, \renyi entropy becomes Shannon entropy, defined as $H_1(X)=-\sum_{x}p(x)\log_2(p(x))$ in bits. In a graph $D$ with nodes labeled as $\{X_i\}_i$, $pa_i$ stands for the set of parents of $X_i$ in $D$. $Dir(\alpha)$ stands for Dirichlet distribution with parameter $\alpha$.

\subsection{Detailed Related Work}
\label{app:related_work}
\noindent \textbf{Latent Variable Discovery:}
Latent variables have been used to model and explain dependence between observed variables in different communities under different names. Probabilistic latent semantic analysis (pLSA) \cite{hofmann1999probabilistic} aims at constructing a variable that explains dependence. However the objective is not to minimize entropy of the constructed variable. Latent Dirichlet allocation (LDA) is another framework which is widely used in topic modeling \cite{blei2003latent,arora2013practical}. Although LDA encourages sparsity of topics, this does not correspond to minimizing the support size of the constructed latent variable. Factorizing the joint distribution matrix between two observed variables via NMF with generalized KL divergence loss recovers solutions to the pLSA problem \cite{gaussier2005relation}. Similar to pLSA, NMF does not have an incentive to discover low-entropy latents. 

Perhaps the most relevant to ours in the machine learning literature are the two papers in the Bayesian setting \cite{brand1999entropic,shashanka2008sparse}. They use low-entropy priors on the latent variable's distribution while performing inference. However their approach is different and their methods cannot be used to discover the tradeoff between conditional mutual information and the entropy of the latent variable. In \cite{shashanka2008sparse}, the authors use low-entropy prior as a proxy for discovering latent factors with sparse support.

Finding the latent variable with smallest entropy that renders the two observed variables conditonally independent is closely related to some of the problems in information theory: Wyner's common information \cite{wyner1975common} is defined as the minimum rate of the source from which the observed variables $X, Y$ can be reconstructed using additional random bits. Wyner allows multiple channel uses and is interested in the approximate reconstruction of the observed joint distribution. This can be seen as approximate reconstruction of the joint distribution when we raise the dimension of $X$ and $Y$ via cartesian product with itself. \cite{kumar2014exact} considers finding the source with the minimum rate for the exact recovery of the observed joint distribution, but still in the asymptotic regime of multiple channel uses. They also introduce the notion of common entropy and obtain an analytical expression for binary variables, which we utilize in this work. 

\noindent \textbf{Learning Causal Graphs with Latents:} Learning causal graphs with latent variables has been extensively studied in the literature. In graphs with many observed variables, some of the edges can be recovered from the observational data (for example through algorithms that employ conditional independence (CI) tests such as IC* \cite{Pearl2009} and FCI \cite{Spirtes2001}). However, latent variables make the CI tests less informative, by inducing spurious correlations between the observed variables. For example for the graphs in Figure \ref{fig:causal_graphs} CI tests on the observed variables is not informative of the causal structure.

Identifiability of causal structures without latent variables from data has been studied extensively in the literature under various assumptions \cite{Hoyer2008,Peters2011,Mooij2010,Peters2014,Peters2016,bahadori2017causal,Etesami2016,Kocaoglu2017}. Our approach can be seen as an extension of \cite{Kocaoglu2017}: There, the authors assume that the exogenous variables have small \renyi entropy and suggest an algorithm to distinguish the causal graph $X\rightarrow Y$ from $X \leftarrow Y$. However, their approach cannot be used in the presence of latent variables. In the presence of latents \cite{cai2018causal} considers a setup similar to \cite{Kocaoglu2017}, where the hidden variable has small support size, however also assumes the mapping to the hidden variable is deterministic. In \cite{janzing2011detecting}, authors identify a condition on $p(Y|X)$ which implies that there does not exist any latent variable $Z$ with small support which can make $X,Y$ conditionally independent. For discrete variables, this assumption implies that the conditionals $p(Y|x)$ lie on the boundary of the probability simplex, which corresponds to the joint probability matrix to be sparse in a structured way. In the continuous variable setting, \cite{sgouritsa2013identifying} propose using kernel methods to detect latent confounders. \cite{weilenmann2017analysing} and \cite{chaves2014inferring} analyzes the discoverability of causal structures with latents using the entropic vector of the variables. Finally, related work also includes \cite{janzing2009identifying} and \cite{liu2016causal}, where the authors extend the additive noise model based approach in \cite{Hoyer2008} to the case with a latent confounder. Algebraic geometry can be used to distinguish causal graphs as the set of distributions that can be encoded by a graph correspond to different algebraic varieties. However, these methods in general are not scalable beyond a few variables and a few number of states \cite{lee2017causal}. Authors in \cite{janzing2010causal} propose using Kolmogorov complexity of the causal model and declare the graph with smaller complexity to be the true graph.  \cite{Kaltenpoth2019} uses description length as a proxy to Kolmogorov complexity to identify the latent confounders. In \cite{steudel2015information}, the authors use information inequalities to infer which subsets of a set of observed variables must have latent confounders, along with an associated lower bound on the entropy of these confounders. In our setting of two observed variables, this gives the trivial bound of $H(Z)\geq I(X;Y)$ for any latent confounder $Z$. 

\subsection{Proof of Theorem \ref{thm:stationary}: Stationarity}
\label{sec:alg_proof}
In this section, we show the first part of Theorem \ref{thm:stationary}, i.e., that the stationarity points of the algorithm are also stationary points of the given loss function.
We write the objective function more explicitly in terms of the optimization variables $q(z|x,y)$:
\begin{align}
\mathcal{L}(q(\cdot|\cdot,\cdot)) &= \sum\limits_{x,y,z} q(x,y,z)\log\left(\frac{q(x,y|z)}{q(x|z)q(y|z)}\right)- \beta \sum\limits_{z}q(z)\log(q(z))\\
&\hspace{-0.3in}=  \sum\limits_{x,y,z} p(x,y)q(z|x,y)\log\left(\frac{q(z|x,y)}{q(z|x)q(z|y)}\right)+( 1 - \beta) \sum\limits_{z}q(z)\log(q(z))+ I(X;Y)
\end{align}
by Bayes rule and assuming that $q(z|x,y)$ and $p(x,y)$ are strictly positive. 

Our objective then is
\begin{equation}
\label{eq:optimization}
\begin{aligned}
& \underset{q(z|x,y)}{\text{minimize}} & & \mathcal{L}(q(z|x,y))\\
& \text{subject to}
& & \sum\limits_{z} q(z|x,y) = 1, \,\, \forall x,y, \\
& && q(z|x,y) \geq 0, \,\, \forall z, x, y.
\end{aligned}
\end{equation}

We can write the Lagrangian, which we represent with $\bar{\mathcal{L}}$, as
\begin{align}
\bar{\mathcal{L}}&=\sum\limits_{x,y,z} p(x,y)q(z|x,y)\log\left(\frac{q(z|x,y)}{q(z|x)q(z|y)}\right)+ I(X;Y)+( 1 - \beta) \sum\limits_{z}q(z)\log(q(z))\nonumber\\
&\hspace{0.3in}+ \sum\limits_{x,y}\delta_{x,y}\left(\sum\limits_{z} q(z|x,y)- 1\right)
\end{align}

In order to find the stationary points of the loss, we take its first derivative and set it to zero. To compute the partial derivatives, notice that $q(z|x),q(z|y),q(z)$ are linear functions of $q(z|x,y)$ (use Bayes rule and marginalization). We can then easily write the partial derivatives of these quantities with respect to $q(z|x,y)$ as follows: 
\begin{align*}
\frac{\del q(z|x)}{\del q(z|x,y)}&= \frac{\del{\sum\limits_{y'}q(z|x,y')p(y'|x) } }{\del{q(z|x,y)}}= p(y|x),\\
\frac{\del q(z|y)}{\del q(z|x,y)}&= \frac{\del{\sum\limits_{x'}q(z|x',y)p(x'|y)}}{\del{q(z|x,y)}} = p(x|y)\\
\frac{\del q(z)}{\del q(z|x,y)}&= \frac{\del{\sum\limits_{x',y'}q(z|x',y')p(x',y') }}{\del{q(z|x,y)}}= p(x,y).
\end{align*}

Using these expressions we have the following.
\begin{align*}
\frac{\del{\bar{\mathcal{L}}}}{\del{q(z|x,y)}} &= p(x,y)\left[1+\log(q(z|x,y)) \right.- (1+\log(q(z|x))) \nonumber\\
&\hspace{0.1in}- (1+\log(q(z|y)))\left.\hspace{0.1in} + (1-\beta)(1+\log(q(z))) + \delta_{x,y} \right]\\
&= p(x,y)\left[ - \beta + \delta_{x,y} 
+ \log\left(\frac{q(z|x,y)q(z)^{1-\beta}}{q(z|x)q(z|y)} \right) \right] 
\end{align*}
Assuming $p(x,y)>0$, any stationary point then satisfies
\begin{equation}
q(z|x,y) = \left(\frac{1}{2}\right)^{\delta_{x,y}-\beta}\frac{q(z|x)q(z|y)}{q(z)^{1-\beta}}
\end{equation}
Since $q(z|x,y)$ is a probability distribution, we have 
\begin{equation}
\sum\limits_{z}q(z|x,y) = \left(\frac{1}{2}\right)^{\delta_{x,y}-\beta}\sum\limits_{z} \frac{q(z|x)q(z|y)}{q(z)^{1-\beta}}= 1
\end{equation}
Defining $F(x,y)\coloneqq  \left(\frac{1}{2}\right)^{\delta_{x,y}-\beta}$, we have
\begin{equation}
F(x,y) = \frac{1}{\sum\limits_{z} \frac{q(z|x)q(z|y)}{q(z)^{1-\beta}}}.
\end{equation}

From the algorithm description, any stationary point of Algorithm \ref{iterative_algorithm} should satisfy 
\begin{equation}
q(z|x,y) = F(x,y) \frac{q(z|x)q(z|y)}{q(z)^{1-\beta}},
\end{equation}
for the same $F(x,y)$ defined above. Therefore a point is a stationary point of the loss function if and only if it is a stationary point of LatentSearch (Algorithm \ref{iterative_algorithm}). \hfill \qed

\subsection{Proof of Theorem \ref{thm:stationary}: Convergence}
\label{sec:alg_proof2}
In this section, we show the latter statement in Theorem \ref{thm:stationary}, i.e., \emph{LatentSearch} converges to a local minimum or a saddle point.
We can rewrite the loss as 
\begin{align}
\mathcal{L}(q(\cdot|\cdot,\cdot)) &= \sum\limits_{x,y,z} q(x,y,z)\log\left(\frac{q(x,y|z)}{q(x|z)q(y|z)}\right)- \beta \sum\limits_{z}q(z)\log(q(z))\\
&=  \sum\limits_{x,y,z} p(x,y)q(z|x,y)\log\left(\frac{q(z|x,y)}{q(z|x)q(z|y)}\right)  \nonumber\\ 
&\hspace{0.1in}+ I(X;Y)+( 1 - \beta) \sum\limits_{z}q(z)\log(q(z)),
\end{align}
If we substitute $\beta = 1$, we obtain
\begin{align}
\mathcal{L}(q(\cdot|\cdot,\cdot))  &=  \sum\limits_{x,y,z} p(x,y)q(z|x,y)\log\left(\frac{q(z|x,y)}{q(z|x)q(z|y)}\right) + I(X;Y).
\end{align} 

Our optimization problem can be written as
\begin{equation}
\label{eq:optimization2}
\begin{aligned}
& \underset{q(z|x,y)}{\text{minimize}} & & \mathcal{L}(q(z|x,y))\\
& \text{subject to}
& & \sum\limits_{z} q(z|x,y) = 1, \forall x,y.
\end{aligned}
\end{equation}
Notice that $\mathcal{L}(q(z|x,y))$ is not convex or concave in $q(z|x,y)$. However we can rewrite the minimization as follows:
\begin{equation}
\label{eq:optimization_augmented}
\begin{aligned}
& \underset{q(z|x,y)}{\text{minimize}} &  & \underset{r(z|x),s(z|y)}{\text{minimize}}   & & \sum\limits_{x,y,z} p(x,y)q(z|x,y)\nonumber\\
&&&&&\hspace{-0.5in}\log\left(\frac{q(z|x,y)}{r(z|x)s(z|y)}\right)  + I(X;Y)\\
&  & & \text{subject to}  & & \sum\limits_{z} q(z|x,y) = 1, \forall x,y  \\
& & & & &  \sum\limits_{z} r(z|x) = 1, \forall x,   \\
& & & & &  \sum\limits_{z} s(z|y) = 1, \forall y.
\end{aligned} 
\end{equation}
To see that (\ref{eq:optimization_augmented}) is equivalent to (\ref{eq:optimization2}), notice that the optimum for the inner minimization is $r^*(z|x) = q(z|x)$ and $s^*(z|x) = q(z|y)$. This is due to the fact that (\ref{eq:optimization_augmented}) is convex in $r(z|x)$ and $s(z|y)$ and concave in $t(z)$, which can be seen through the partial derivatives of the Lagrangian:
\begin{align}
&\underset{q(z|x,y)}{\text{min}} & & \underset{r(z|x),s(z|y)}{\text{min}}   & & \sum\limits_{x,y,z} p(x,y)q(z|x,y)+\log\left(\frac{q(z|x,y)}{r(z|x)s(z|y)}\right)  + I(X;Y) \\
&&&&& + \sum\limits_{x,y}\delta_{x,y}\left(\sum\limits_{z} q(z|x,y)- 1\right)+ \sum\limits_x \eta_x\left(\sum\limits_{z} r(z|x) - 1\right)\\
&&&&& + \sum\limits_x \nu_y\left(\sum\limits_{z} s(z|y) - 1\right) 
\end{align}
Let $\bar{\mathcal{L}}$ be defined as 
\begin{align}
\bar{\mathcal{L}}&=\sum\limits_{x,y}\delta_{x,y}\left(\sum\limits_{z} q(z|x,y)- 1\right)+ \sum\limits_x \eta_x\left(\sum\limits_{z} r(z|x) - 1\right) + \sum\limits_x \nu_y\left(\sum\limits_{z} s(z|y) - 1\right)
\end{align}

For fixed $q(z|x,y),s(z|y)$, we have
\begin{align*}
\frac{\del{\bar{\mathcal{L}}}}{\del{r(z|x)}} &= -\frac{p(x)q(z|x)}{r(z|x)}+\eta_x \\
\frac{\deltwo{\bar{\mathcal{L}}}}{\del{r(z|x)}^2} &=\frac{p(x)q(z|x)}{r(z|x)^2}.
\end{align*}
Therefore $\bar{\mathcal{L}}$ is convex in $r(z|x)$ and the optimum can be obtained by setting the first derivative to zero. Then we have
\begin{equation}
r^*(z|x) = \frac{p(x)q(z|x)}{\eta_x}, \forall x,z.
\end{equation}
Since we have $\sum\limits_{z}r^*(z|x) = \frac{p(x)}{\eta_x}\sum\limits_{z}q(z|x) = 1$, we obtain $r^*(z|x) = q(z|x)$. Similarly, we can show that $s^*(z|x) = q(z|y)$. Notice that this inner minimization is exactly the same as the first update of Algorithm \ref{iterative_algorithm}. 

We can also show that $\mathcal{L}$ is convex in the variables $r,s$ jointly: This can be seen through the fact that $\frac{\deltwo{}}{\del{r(z|x)s(z|y)}}\mathcal{L}=0$ and the Hessian is positive definite.

This concludes that (\ref{eq:optimization_augmented}) is equivalent to (\ref{eq:optimization}). Moreover, since the objective function is convex in $q(z|x,y)$ and also jointly convex in $r(z|x),s(z|y)$, we can switch the order of the minimization terms. Therefore, we can equivalently write
\begin{align}
\label{eq:optimization_augmented2}
& \underset{r(z|x),s(z|y)}{\text{min}}   & & \underset{q(z|x,y)}{\text{min}} & & \sum\limits_{x,y,z} p(x,y)q(z|x,y)\log\left(\frac{q(z|x,y)}{r(z|x)s(z|y)}\right)  + I(X;Y)+\bar{\mathcal{L}}
\end{align}

Let us analyze the inner minimization in this equivalent formulation for fixed $r(z|x),s(z|x)$. Similarly, we can take the partial derivative as follows:
\begin{align*}
\frac{\del{\bar{\mathcal{L}}}}{\del{q(z|x,y)}} &= p(x,y)\left[1+\log(q(z|x,y)) - \log(r(z|x)) - \log(s(z|x)) +  \delta_{x,y} \right]\\
&= p(x,y)\left[ 1 + \delta_{x,y} + \log\left(\frac{q(z|x,y)}{r(z|x)s(z|y)} \right) \right] \\
\frac{\deltwo{\bar{\mathcal{L}}}}{\del{q(z|x,y)}^2} &= p(x,y)\left[ \frac{1}{q(z|x,y)}  \right].
\end{align*}

Notice that $\frac{\deltwo{\bar{\mathcal{L}}}}{\del{q(z|x,y)}^2} > 0$. Hence $\bar{\mathcal{L}}$ is convex in $ q(z|x,y)$. Then the optimum can be obtained by setting the first derivative to zero. We have
\begin{equation}
p(x,y)\left[ 1 + \delta_{x,y} + \log\left(\frac{q(z|x,y)}{r(z|x)s(z|y)} \right) \right] = 0,
\end{equation}
or equivalently
\begin{equation}
q(z|x,y) = \left(\frac{1}{2}\right)^{1+\delta_{x,y}}r(z|x)s(z|y).
\end{equation}
Note that if we define 
$$
F(x,y)\coloneqq \sum\limits_{z}r(z|x)s(z|y),
$$ 
since 
$\sum\limits_{z}q(z|x,y) = \left(\frac{1}{2}\right)^{1+\delta_{x,y}}\sum\limits_{z}r(z|x)s(z|y) = 1$, we can write 
\begin{equation}
q(z|x,y) = \frac{1}{F(x,y)}r(z|x)s(z|y).
\end{equation}
This is exactly the same as the second update of LatentSearch (Algorithm \ref{iterative_algorithm}) if $r(z|x)=q(z|x), s(z|y)=q(z|y)$. 

Therefore, if $q_i(z|x,y)$ is the current conditional at iteration $i$, the next update of LatentSearch (Algorithm \ref{iterative_algorithm}) is equivalent to first solving the inner minimization of (\ref{eq:optimization_augmented}) thereby assigning $r(z|x) = q_i(z|x), s(z|y)=q_i(z|y)$, then switching the order of the minimization operations, and solving the inner minimization of (\ref{eq:optimization_augmented2}), therefore assigning $q_{i+1}(z|x,y) = \frac{1}{F(x,y)}q_i(z|x)q_i(z|y)$. In each of this two-step optimization iteration, either loss function goes down, or it does not change. If it does not change, the algorithm has converged. Otherwise, it cannot go down indefinitely since loss (\ref{eq:loss}) is lower bounded as $I(X;Y|Z)\geq 0$ and $H(Z)\geq 0$ and therefore has to converge. This proves convergence of the algorithm to either a local minimum or a saddle point. The converged point cannot be a local maximum since it is arrived at after a minimization step. \hfill \qed

\subsection{Proof of Theorem \ref{thm:identifiability}}
We first show the result for distinguishing latent graph from the triangle graph.

Since $X,Y$ are discrete variables, we can represent the joint distribution of $X,Y$ in matrix form. Let $\mat{M} = [p(x,y)]_{(x,y)\in [m]\times[n]}$. With a slight abuse of notation, let $\mat{z}\coloneqq \left[z_1, z_2,\hdots z_k \right]$ be the probability mass (row) vector of variable $Z$, i.e., $\p{Z=i} = \mat{z}[i] = z_i$. Similarly, let $\mat{x}_z\coloneqq \left[ x_{z,1}, x_{z,2}, \hdots x_{z,k}\right]$ be the conditional probability mass vector of $X$ conditioned on $Z=z$, i.e., $\p{X=i|Z=z} = \mat{x}_z[i] = x_{z,i}$. Finally, let $\mat{y}_{z,x} \coloneqq \left[ y_{z,x,1}, y_{z,x,2}, \hdots y_{z,x,n} \right]$ be the conditional probability mass vector of $Y$ conditioned on $X=x$ and $Z=z$. We can write the matrix $\mat{M}$ as follows:
\begin{equation}
\label{eq:M}
\mat{M} = \sum\limits_{i=1}^k z_i \begin{bmatrix}
x_{i,1}\mat{y}_{i,1} \\
x_{i,2}\mat{y}_{i,2} \\
\vdots \\
x_{i,m}\mat{y}_{i,m}
\end{bmatrix}
\end{equation}

Now suppose for the sake of contradiction that there exists such a $q(x,y,z)$ such that $\sum_{z}q(x,y,z) = p(x,y)$ and $\ci{X}{Y}{Z}$. Then $\mat{M}$ admits a factorization of the form 
\begin{equation}
\mat{M} = \sum\limits_{i=1}^k z_i' \begin{bmatrix}
x_{i,1}'\mat{y}_{i,1}' \\
x_{i,2}'\mat{y}_{i,2}'\\
\vdots \\
x_{i,m}'\mat{y}_{i,m}'
\end{bmatrix},
\end{equation}
where $x_{i,j}',\mat{y}_{i,j}',z_i'$ are due to the joint $q(x,y,z)$ and are potentially different form their counterparts in (\ref{eq:M}). Notice that since $\ci{X}{Y}{Z}$, we have $\mat{y}_{i,j}' = \mat{y}_{i,l}', \forall (j,l)\in [k]\times[m]$. Therefore the matrices
\begin{equation}
\begin{bmatrix}
x_{i,1}'\mat{y}_{i,1}' \\
x_{i,2}'\mat{y}_{i,2}'\\
\vdots \\
x_{i,m}'\mat{y}_{i,m}'
\end{bmatrix},
\end{equation}
are rank 1 $\forall i\in [k]$. Therefore, $\mat{M}$ has NMF rank at most $k$. Since matrix rank is upper bounded by the NMF rank, $\text{rank}(M)\leq k$. Therefore, there exists a $q(x,y,z)$ such that $\sum\limits_{z}q(x,y,z) = p(x,y)$ and $\ci{X}{Y}{Z}$ \emph{only if} $\text{rank}(M)\leq k$. In fact, it is easy to show that this is an if and only if relation: Any NMF of the joint distribution corresponds to a latent confounder and and latent confounder corresponds to an NMF of the joint distribution. Next, we show that under the generative model described in the theorem statement, this happens with probability zero.

We have the following lemma:
\begin{lemma}
	\label{lem:simplex}
	Let $\{\mat{x}_i: i \in [n]\}$ be a set of vectors sampled independently, uniformly randomly from the simplex $S_{n-1}$ in $n$ dimensions. Then, $\{\mat{x}_i: i \in [n]\}$ are linearly independent with probability 1.
\end{lemma}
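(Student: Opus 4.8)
The plan is to argue by induction on the number of sampled vectors, showing that each newly drawn vector almost surely avoids the linear span of the previously drawn ones, as long as fewer than $n$ of them have been sampled. First I would record the ambient geometry. The simplex $S_{n-1}$ lives inside the affine hyperplane $H = \{\mat{v}\in\R{R}^n : \sum_i v_i = 1\}$, which is an $(n-1)$-dimensional affine set, and the uniform distribution on $S_{n-1}$ is absolutely continuous with respect to the $(n-1)$-dimensional Lebesgue (Hausdorff) measure on $H$ restricted to $S_{n-1}$. The consequence I want is that any subset of $H$ having $(n-1)$-dimensional measure zero is hit with probability zero by a single uniformly sampled $\mat{x}_i$.

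For the inductive step, suppose $\mat{x}_1,\dots,\mat{x}_j$ are linearly independent for some $j<n$, and set $V=\mathrm{span}\{\mat{x}_1,\dots,\mat{x}_j\}$, a $j$-dimensional linear subspace of $\R{R}^n$. Adding $\mat{x}_{j+1}$ breaks independence only if $\mat{x}_{j+1}\in V$; since $\mat{x}_{j+1}\in S_{n-1}\subseteq H$, this forces $\mat{x}_{j+1}\in V\cap H$. The key geometric observation is that $V\cap H$ is a proper, strictly lower-dimensional affine subset of $H$. Indeed, restrict the linear functional $\mat{v}\mapsto\sum_i v_i$ to $V$: if it is not identically zero on $V$, then $V\cap H$ is a level set of dimension $j-1$; if it vanishes on $V$, then $V\cap H=\emptyset$. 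In either case $\dim(V\cap H)\le j-1\le n-2<n-1=\dim H$, so $V\cap H$ has $(n-1)$-dimensional measure zero inside $H$.

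Combining these two facts, and using that $\mat{x}_{j+1}$ is drawn independently of $\mat{x}_1,\dots,\mat{x}_j$, the conditional probability that $\mat{x}_{j+1}\in V$ given any fixed linearly independent tuple $\mat{x}_1,\dots,\mat{x}_j$ equals zero; integrating this conditional probability out yields an unconditional probability zero that the $(j{+}1)$-th vector lies in the span of the first $j$. A union bound over the finitely many steps $j=1,\dots,n-1$ then shows that, with probability $1$, no vector lies in the span of its predecessors, which is precisely the assertion that $\{\mat{x}_i:i\in[n]\}$ is linearly independent.

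The main obstacle — in fact the only point that requires genuine care — is the measure-theoretic bookkeeping: one must justify that the uniform law on the simplex assigns zero mass to the lower-dimensional set $V\cap H$. This rests on the absolute continuity of the uniform measure with respect to $(n-1)$-dimensional Lebesgue measure on $H$, together with the dimension count $\dim(V\cap H)\le j-1<n-1$. Everything else, namely the induction and the finite union bound over the $n$ draws, is routine.
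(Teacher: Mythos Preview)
Your proposal is correct and takes essentially the same approach as the paper: both argue via a union bound that each newly sampled vector almost surely misses the span of its predecessors, using that the intersection of a low-dimensional linear subspace with the affine hyperplane $H=\{v:\sum_i v_i=1\}$ carrying the simplex has $(n-1)$-dimensional measure zero. Your inductive framing and direct analysis of $\dim(V\cap H)\le j-1$ handle all steps uniformly, whereas the paper splits off the last step $i=n$ and cites a simplex-slice fact; this is a cosmetic difference, not a different route.
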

\begin{proof}
	If $\mat{x}_i$ are linearly dependent, then there exists a set $\{\alpha_i : i\in[n]\}$ such that $\sum\limits_{i=1}^n\alpha_i\mat{x}_i=0$. Let $j = \argmax\{i\in [n]:\alpha_i>0 \}$. Equivalently $\mat{x}_j$ is in the range of the set of vectors $\{\mat{x}_i : i\in [j-1]\}$. Therefore, we can write 
	\begin{align}
	\p{\{\mat{x}_i: i \in [n]\} \text{ are linearly independent }}\nonumber\\
	&\hspace{-1.8in}\leq \sum\limits_{i=2}^n \p{\mat{x}_i\in R(\mat{x}_1,\hdots,\mat{x}_{i-1})},\label{eq:terence}
	\end{align} 
	where $R(\mat{x}_1,\hdots,\mat{x}_{i-1})$, is the range of the vectors $\mat{x}_1,\hdots,\mat{x}_{i-1}$, i.e., the vector space spanned by $\mat{x}_1,\hdots,\mat{x}_{i-1}$. 
	
	Notice that $\text{dim($ R(\mat{x}_1,\hdots,\mat{x}_{i-1}) $)}< n-1, \forall i\leq n-1 $. Therefore, codimension of $R(\mat{x}_1,\hdots,\mat{x}_{i-1}) $ with respect to the simplex is non-zero $\forall i \leq n-1$. Therefore, the Lebesgue measure of $R(\mat{x}_1,\hdots,\mat{x}_{i-1}) \cap S_{n-1}$ is zero with respect to the uniform measure over $S_{n-1}$. Hence, $\p{\mat{x}_i\in R(\mat{x}_1,\hdots,\mat{x}_{i-1})} = 0, \forall i\leq n-1$.
	
	The above argument does not hold for the last term in the summation in (\ref{eq:terence}). However, intersection of any $n-1$ dimensional vector space with the simplex $S_{n-1}$ is an $n-2$ dimensional slice of the simplex \cite{webb1996central}. Therefore, it has Lebesgue measure zero with respect to the uniform measure over the simplex.
\end{proof}

\begin{corollary}
	\label{cor:simplex}
	Let $\{\mat{x}_i: i \in [n]\}$ be a set of vectors sampled independently, uniformly randomly from the simplex $S_{n-1}$ in $n$ dimensions. Let $\{c_i \neq 0: i\in[n] \}$ be arbitrary real scalars that are non-zero. Then, $\{c_i\mat{x}_i: i \in [n]\}$ are linearly independent with probability 1.
\end{corollary}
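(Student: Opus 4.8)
The plan is to reduce the statement directly to Lemma~\ref{lem:simplex} using the deterministic fact that scaling vectors by nonzero scalars preserves linear independence. Since the randomness enters only through the vectors $\{\mat{x}_i\}$ and not through the fixed scalars $\{c_i\}$, the probability-$1$ conclusion will be inherited verbatim from the lemma once the algebraic reduction is in place.

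Concretely, I would first invoke Lemma~\ref{lem:simplex} to obtain that the event
\[
E = \{\, \{\mat{x}_i : i\in[n]\} \text{ are linearly independent}\,\}
\]
has probability $1$. I would then argue that $E$ is contained in the event that $\{c_i\mat{x}_i : i\in[n]\}$ are linearly independent, so that the latter also has probability $1$. The containment is the only thing that needs checking, and it is deterministic: suppose $\sum_{i=1}^n \alpha_i (c_i\mat{x}_i) = 0$ for some scalars $\alpha_i$. Rewriting this as $\sum_{i=1}^n (\alpha_i c_i)\mat{x}_i = 0$ and using linear independence of $\{\mat{x}_i\}$ (which holds on $E$) forces $\alpha_i c_i = 0$ for every $i$. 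Because each $c_i \neq 0$ by hypothesis, this yields $\alpha_i = 0$ for all $i$, which is exactly linear independence of the scaled family.

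I expect there to be no genuine obstacle here: the corollary is immediate once one observes that multiplying each basis-candidate vector by a nonzero constant is an invertible diagonal rescaling and hence cannot collapse the span. The only point deserving a word of care is that the argument is purely pointwise (it holds on every outcome in $E$, not merely in distribution), so no further measure-theoretic work beyond citing Lemma~\ref{lem:simplex} is required. In particular, the delicate boundary case handled in the lemma's proof -- the intersection of an $(n-1)$-dimensional subspace with the simplex $S_{n-1}$ -- is already absorbed into the probability-$1$ event $E$, and need not be revisited.
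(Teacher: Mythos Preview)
Your proposal is correct and matches the paper's own argument: the paper simply notes that the proof of Lemma~\ref{lem:simplex} goes through because scaling vectors by nonzero scalars does not change their span, which is exactly the deterministic reduction you spell out. Your version uses the lemma as a black box rather than re-running its proof, but the content is the same.
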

\begin{proof}
	The proof of Lemma \ref{lem:simplex} goes through since the span of a set of vectors does not change with scaling of the vectors.
\end{proof}

$\mat{M}$ is rank deficient if and only if its determinant is zero, i.e.,  $\text{det}(M) = 0$. The determinant is a polynomial in $\{z_i:i\in [k]\}$. By induction, one can show that if a finite degree multivariate polynomial is not identically zero, the set of roots has zero Lebesgue measure (for example, see \cite{Caron2015}). The uniform measure over the simplex is absolutely continuous with respect to Lebesgue measure. Hence, the set of roots of a finite degree multivariate polynomial has measure zero with respect to the uniform measure over the simplex.

To show that $\text{det}(\mat{M})$ is not identically zero, it is sufficient to choose a set of $z_i's$ for which determinant is non-zero. First, observe that by Corollary  \ref{cor:simplex}, each matrix 
\begin{equation}
\begin{bmatrix}
x_{i,1}\mat{y}_{i,1} \\
x_{i,2}\mat{y}_{i,2} \\
\vdots \\
x_{i,m}\mat{y}_{i,m}
\end{bmatrix}
\end{equation}
is full rank with probability 1. Let $z_1 = 1$ and $z_j , \forall j\in \{2,3,\hdots, k\}$. Then $\text{det}(\mat{M})\neq 0$ since $\mat{M}$ is full rank. Therefore, the determinant, which is a polynomial in $\{z_i : i\in[k]\}$ is not identically zero. This concludes the proof that with probability 1, $\text{rank}(\mat{M})=n>k$. 

If the distribution is generated from the direct graph, from Lemma \ref{lem:simplex}, we know that the rows of conditional probability matrix are linearly independent. Since joint probability matrix can be obtained by scaling each row of this matrix with the probability values of $X$, and this operation does not change rank, joint probability matrix obtained from the direct graph is full rank with probability 1. Therefore non-negative rank of this matrix has to be $n$, concluding the proof.
\hfill \qed

\subsection{Proof of Corollary \ref{cor:cardinality}}
The statement follows from the fact that the proposed generative model induces a non-zero probability measure on every joint distribution, which is the set of distributions that can be encoded from the \emph{triangle graph} and any distribution that can be encoded by the \emph{latent graph} requires $\ci{X}{Y}{Z}$, which we show in Theorem \ref{thm:identifiability} happens with probability zero.\hfill \qed

\subsection{Proof of Theorem \ref{thm:binary_identifiability}}
We give the proof for binary $Z$. The argument can be extended to when $Z$ has any finite number of states.

We overload the notation and use $z$ for the probability that random variable Z is $0$. We have
\begin{align}
&p(Z=0) = z,\hspace{0.35in} p(Z=1) = 1-z\\
&p(X=0|Z=z)=x_z\\
&p(Y=0|X=x,Z=z)=y_{x,z}
\end{align}

The conditional distributions from UGM can be sampled uniformly from the simplex via normalized exponential random variables, however in the case of binary variables, this is equivalent to sampling uniformly. Hence, we can assume $x_z,y_{x,z}$ are uniform random variables with support $[0,1]$. Based on this generative model, we can calculate $p(x)$ and $p(y|x)$ as follows:
\begin{align}
&p(X=0) = x_0z+x_1(1-z),\label{eq:generative1}\\
&p(X=1) = (1-x_0)z + (1-x_1)(1-z)\nonumber\\
&p(Y=0|X=0) = \frac{y_{0,0}x_0z + y_{0,1}x_1(1-z)}{x_0z+x_1(1-z)}\label{eq:generative2}\\
&p(Y=0|X=1) = \frac{y_{1,0}(1-x_0)z + y_{1,1}(1-x_1)(1-z)}{(1-x_0)z + (1-x_1)(1-z)}\label{eq:generative3}
\end{align}

We use the characterization of \cite{kumar2014exact} for the minimum entropy $Z$ that can make $X,Y$ conditionally independent. Let $t = p(X=0)$ and let $\alpha\coloneqq p(Y=0|X=0), \beta\coloneqq p(Y=0|X=1)$. We re-state their theorem for self-containment of our paper:
\begin{theorem}[\cite{kumar2014exact}]
\label{thm:LB}
Consider two binary random variables $X,Y$. Define $t\coloneqq p(X=0),\alpha\coloneqq p(Y=0|X=0),  \beta\coloneqq p(Y=0|X=1)$. Let $\alpha' = \min\{\alpha,\beta\},\beta'\coloneqq\max\{\alpha,\beta\}$. Then of all $q(x,y,z')$ where $q(x,y)=p(x,y)$ and $\ci{X}{Y}{Z}$ minimum entropy $Z'$ has entropy
\begin{align}
&LB\coloneqq \min\{ H_b(A),H_b(B)\},\\
&A = t\left(1-\frac{\alpha'}{\beta'}\right), B = (1-t)\left(1-\frac{1-\beta'}{1-\alpha'}\right) \label{eq:common_entropy_lower_bound}
\end{align} 
\end{theorem}

Note that in the generative model we are considering, the entries of $p(x,y)$ are random variables, which implies that $LB$ is a random variable. 

Consider a sequence $z_n$. Let $Z_n$ be the binary random variable where $\mathbb{P}(Z_n=0)=z_n$. Notice that $H(z_n)$ converges to zero if and only if $z_n$ converges to either $0$ or $1$. Since the generative model is symmetric with respect to the conditionals $p(x|z=0)$ compared to $p(x|z=1)$ and $p(y|x,z=0)$ compared to $p(y|x,z=1)$, without loss of generality we can consider the case where $z_n$ goes to $0$.

Now suppose $0<z_0<0.5$ and $z_n$ is a monotonically decreasing sequence. When we substitute $z_n$ for $z$ in the generative model, we use the symbols in Theorem \ref{thm:LB} with subscript $n$ to distinguish them for different values of $n$.

The event that there does not exist a latent variable with small entropy that can make the observed variables independent is equivalent to the event that the lower bound is strictly greater than the entropy of the true latent variable:
\begin{align}
 \mathbb{P}(\mathcal{Q}_p=\emptyset)
&=\mathbb{P}(p(x,y): \nexists q(x,y,z')\\
& \text{ s.t. } \sum_{z'} q(x,y,z')=p(x,y), \\
&\hspace{0.35in} \ci{X}{Y}{Z'}, H(Z')\leq H(Z)) \\
&= \mathbb{P}(LB> H(Z))
\end{align}

We want to show that
\begin{equation}
\lim_{n\rightarrow \infty} \mathbb{P}(LB_n > H(Z_n)) = 1.
\end{equation}

Define the following events:
\begin{align}
\varepsilon_{z_n}^A \coloneqq \{\text{Event that } H_b(A_n)\leq H_b(z_n)\}.\\
\varepsilon_{z_n}^B \coloneqq \{\text{Event that } H_b(B_n)\leq H_b(z_n)\}.
\end{align}
By union bound
\begin{align}
\mathbb{P}(LB_n \leq H(Z_n)) &\leq \mathbb{P}(\varepsilon_{z_n}^A)+\mathbb{P}(\varepsilon_{z_n}^B)
\end{align}
We first investigate the term $\lim_{n\rightarrow \infty}\mathbb{P}(\varepsilon_z^A)$. By conditioning on the event that $A_n\leq0.5$ and $A_n>0.5$, we can reduce the comparison between $H_b(a),H_b(c)$ to a comparision between $a$ and $c$. Due to first applying the law of total probability and then Bayes rule, we have
\begin{align}
\mathbb{P}(\varepsilon_{z_n}^A) &=\mathbb{P}\left(\varepsilon_{z_n}^A\vert A_n\leq 0.5\right)\mathbb{P}(A_n\leq 0.5) + \mathbb{P}\left(\varepsilon_{z_n}^A\vert A_n>0.5\right)\mathbb{P}(A_n>0.5)\nonumber\\
&=\mathbb{P}\left(A_n \leq z_n \vert A_n\leq 0.5\right)\mathbb{P}(A_n\leq 0.5) + \mathbb{P}\left(A_n \geq 1 - z_n\vert A_n>0.5\right)\mathbb{P}(A_n>0.5)\nonumber\\
&=\mathbb{P}\left(A_n \leq z_n \right)\mathbb{P}(A_n\leq 0.5\vert A_n \leq z_n ) + \mathbb{P}\left(A_n\geq 1 - z_n\right)\mathbb{P}(A_n>0.5\vert A_n \geq 1 - z_n)\nonumber\\
&=\mathbb{P}\left(A_n \leq z_n \right)+ \mathbb{P}\left(A_n\geq 1 - z_n\right)\nonumber
\end{align}
Define the following random variable:
\begin{eqnarray}
S_n^A\coloneqq -z_n + t_n\left(1-\frac{\alpha_n'}{\beta_n'}\right),
\end{eqnarray}
where the terms on the right hand side are as defined in Theorem \ref{thm:LB}.  Then $\pp{A_n \leq z_n} = \pp{S_n^A\leq 0}$ and $\pp{A_n \geq 1-z_n}=\pp{S_n^A\geq 1}$. We have
\begin{align}
\pp{S_n^A\leq 0} = \int_{-\infty}^0 S_n^A d\mu,
\end{align}
where $\mu$ is the probability measure induced by the generative model. Note that $t_n\in[0,1],z_n\in(0,0.5),\frac{\alpha_n'}{\beta_n'}\in (0,1]$, we have $|S_n|\leq 1$. Then from the dominated convergence theorem since $\int 1 d\mu = 1 <\infty$, we have
\begin{align}
\lim_{n\rightarrow \infty} \int_{-\infty}^0 S_n^A d\mu &= \int_{-\infty}^0 \lim_{n\rightarrow \infty} S_n^Ad\mu.
\end{align}
We have 
\begin{align}
&\lim_{n\rightarrow \infty}S_n^A = \lim_{n\rightarrow \infty} -z_n + t_n\left(1-\frac{\alpha_n'}{\beta_n'}\right)
\end{align}
Both $\alpha_n$ and $\beta_n$ are random variables supported on $[0,1]$. Moreover, since limit exists for $\alpha_n,\beta_n$, it also exists for $\alpha_n'\coloneqq \min\{\alpha_n,\beta_n\}$, similarly it exists for $\beta_n'$. Therefore,
\begin{align}
&\lim_{n\rightarrow \infty} -z_n + t_n(1-\frac{\alpha_n'}{\beta_n'}) = x_1\left(1 -\frac{\lim_n \alpha_n'}{\lim_n \beta_n'} \right)\\
&=x_1\left(1 - \frac{\lim_n \min\{\alpha_n,\beta_n\}}{\lim_n \max\{\alpha_n,\beta_n\}}\right)\\
&=x_1\left(1 - \frac{ \min\{\lim_n\alpha_n,\lim_n\beta_n\}}{ \max\{\lim_n\alpha_n,\lim_n\beta_n\}}\right)\\
& = x_1\left(1-\frac{\min\{y_{0,1},y_{1,1}\}}{\max\{y_{0,1},y_{1,1}\}}\right)
\end{align}
where the last equation follows from the equations (\ref{eq:generative1})-(\ref{eq:generative3}). Finally, we have that 
\begin{align}
&\int_{-\infty}^0 x_1\left(1-\frac{\min\{y_{0,1},y_{1,1}\}}{\max\{y_{0,1},y_{1,1}\}}\right) d\mu \\
&= \pp{x_1\left(1-\frac{\min\{y_{0,1},y_{1,1}\}}{\max\{y_{0,1},y_{1,1}\}}\right)\leq 0}\\
& = \pp{x_1\left(1-\frac{\min\{y_{0,1},y_{1,1}\}}{\max\{y_{0,1},y_{1,1}\}}\right)= 0}=0\label{eq:limprob}\\
\end{align}
where the last two equations follow from the fact that $x_1\left(1-\frac{\min\{y_{0,1},y_{1,1}\}}{\max\{y_{0,1},y_{1,1}\}}\right)$ is supported in the interval $[0,1]$ and has an  absolutely continuous measure, which implies that measure of a single point is zero.

Similarly, we can calculate 
\begin{eqnarray}
\pp{S_n^A\geq 1}  = \int_1^\infty S_n^Ad\mu = 0,
\end{eqnarray}
which leads to $\pp{\varepsilon_{z_n}^A}=0$. 

Next, we consider the same analysis for $\pp{\varepsilon_{z_n}^B}$. One difference is the replacement of $t_n$ with $1-t_n$ which does not affect the derivation except for the replacement of $x_1$ with $1-x_1$. Moreover, in the numerator within the paranthesis in $(\ref{eq:limprob}$, $\min\{y_0,1,y_{1,1}\}$ is replaced with $1-\max\{y_0,1,y_{1,1}\}$ and similarly in the denominator $\max\{y_0,1,y_{1,1}\}$ is replaced with $1-\min\{y_0,1,y_{1,1}\}$. It follows that $\pp{\varepsilon_{z_n}^B}=0$. This implies that $\lim_{n\rightarrow \infty}\pp{LB_n\leq H(Z_n)}= 0$ concluding the proof.
\hfill\qed

\subsubsection{Comments on Distinguishing Direct Graph from Latent Graph with Entropy}
\label{sec:direct_vs_latent_discussion}
Consider the uniform generative model for the triangle graph. It is easy to see that in this case, in $(\ref{eq:common_entropy_lower_bound})$, $t,\alpha',\beta'$ become independent and uniformly distributed random variables with the compact support $[0,1]$. One can calculate the distribution of this lower bound accordingly. This can be used to obtain the probability of identifiability between direct graph and the latent graph for a given upper bound on the entropy of the latent variable. We do not pursue this calculation here.

\subsubsection{Extension to $Z$ with $k$ states}
Consider the setting where $Z$ has $k$ states.  We use the following notation in this section:
\begin{equation}
p(Z=i) = z^{(i)}.
\end{equation}
First, note that the characterization of \cite{kumar2014exact} is still applicable since they show increasing the dimension of $Z$ to more than two states cannot reduce the minimum entropy. Similar to the above proof, we will assume a sequence of random variables $Z_n$. Let $Z_n$ be a sequence of random variables with the pmf 
\begin{equation}
p(Z_n=i) = z_n^{(i)}.
\end{equation}
Note that $H(Z_n)\rightarrow 0$ if and only if $\exists i \in [k]$ such that $z_n^{(i)}\rightarrow 1$ and $(z_n^{(j)})_{j\neq i}\rightarrow \mat{0}$. In the following, we show that a similar analysis to the binary case goes through irrrespective of how $(z_n^{(j)})_{j\neq i}$ converges to the zero vector. Suppose without loss of generality $z_n^{(1)}\rightarrow 1$.

Due to the grouping  rule of entropy, we have
\begin{equation}
H(Z_n) = H_b(z_n^{(1)})+(1-z_n^{(1)})H((w_n^{i})_{2\leq i\leq k}),
\end{equation}
where $w_n^{(i)} = \frac{z_n^{(i)}}{1-z_n^{(1)}}$. Let $N$ be such that $z_N^{(1)}\geq 1-\frac{\epsilon}{\log_2(k)}$. Then $z_n^{(1)}\geq 1-\frac{\epsilon}{\log_2(k)} ,\forall n>N$. Then we have $H(Z_n) \leq H_b(z_n^{(1)})+\epsilon,\forall n\geq N$.

Now we can replicate the proof for the binary $Z$ as follows. Let us define the events:
\begin{align*}
\varepsilon_{n}^A &\coloneqq \{\text{Event that } H_b(A_n)\leq H_b(z_n^{(1)})\}.\\
\varepsilon_{n}^B &\coloneqq \{\text{Event that } H_b(B_n)\leq H_b(z_n^{(1)})\}.\\
\delta_{n}^A &\coloneqq \{\text{Event that } H_b(z_n^{(1)})<H_b(A_n)\leq H(Z_n)\}.\\
\delta_{n}^B &\coloneqq \{\text{Event that } H_b(z_n^{(1)})<H_b(B_n)\leq H(Z_n)\}.
\end{align*}
By union bound
\begin{align}
&\mathbb{P}(LB_n \leq H(Z_n)) \leq \mathbb{P}(\varepsilon_{n}^A)+\mathbb{P}(\varepsilon_{n}^B)+\mathbb{P}(\delta_{n}^A)+\mathbb{P}(\delta_{n}^B)
\end{align}
We can write 
\begin{equation}
\mathbb{P}(\delta_n^A) \leq \int_{H_b(z_n^{(1)})}^{H_b(z_n^{(1)})+\epsilon}H_b(A)d\mu.
\end{equation}
It is easy to see that $\lim_{n\rightarrow \infty}\mathbb{P}(\delta_n^{A})=0$ since $\epsilon\rightarrow 0$ and the measaure induced by the generative model is absolutely continuous in the integral.

The rest of the analysis follows similarly to the binary case: We can obtain expressions for $\alpha,\beta$ using $k$ terms instead of 2. In the limit, all but the term that contains $z_n^{(1)}$ go to zero and we can conclude the proof using the same arguments. \hfill \qed
\begin{figure}[t!]
	\begin{center}
		\includegraphics[width=0.4\textwidth]{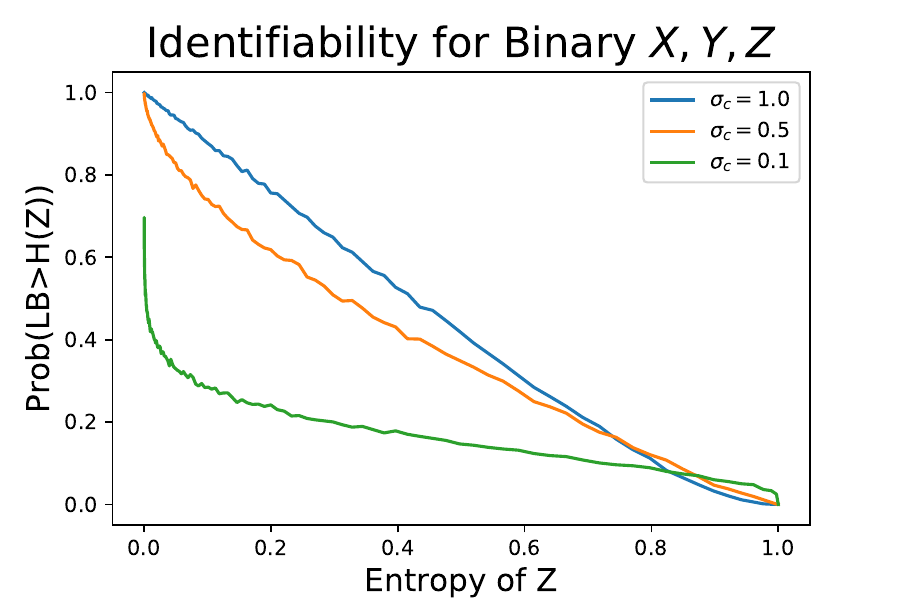}
	\end{center}
	\caption{The measure of distributions from the triangle graph that does not fit to latent graph with a latent at least as simple as the true latent. As the entropy of the true latent goes to $0$, this fraction goes to $1$. This is precisely the measure of models which are identifiable with our entropic approach in the case of binary $X,Y,Z$. }
	\label{fig:binary_identifiability}
\end{figure}
\begin{figure}[t!]
	\vskip 0.2in
	\begin{center}
		\includegraphics[width=0.55\columnwidth]{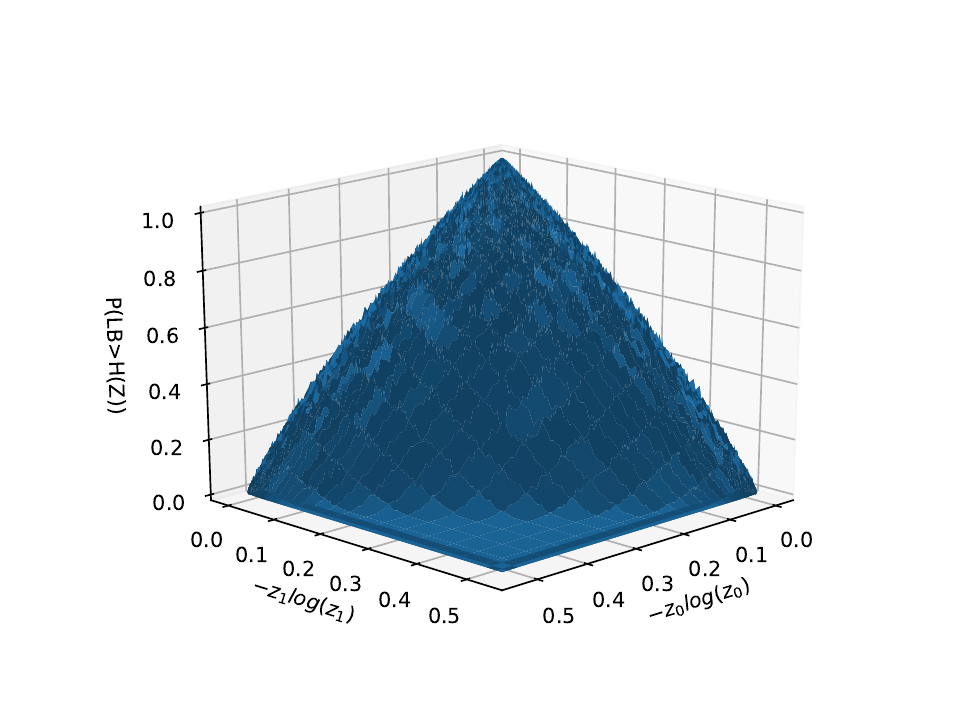}\caption{Fraction of distributions from the triangle graph that does not fit to latent graph with a latent at least as simple as the true latent for binary $X,Y$ and ternary $Z$. As the entropy of the true latent goes to $0$, this fraction goes to 1. This is precisely the fraction of models which are identifiable with our entropic approach in the case of binary $X,Y,Z$. }
		\label{fig:ternary_z_identifiability}
	\end{center}
\end{figure}
\subsection{Entropic Identifiability for Binary Variables}
\label{app:binary}
For $H(Z)>0$, we can approximate the fraction of identifiable causal models via simulations. We sample probability distributions from the uniform generative model. For each sample we check if the entropy of the true latent $H(Z)$ is strictly less than the common entropy of observed variables $G_1(X,Y)$. These are the models from triangle graph which cannot be fit onto latent graph with a low-entropy latent. Figure \ref{fig:binary_identifiability} shows this fraction. $\sigma_c$ is the parameter of the Dirichlet distribution used to sample the conditional distributions from the Triangle graph. $\sigma_c=1$ corresponds to uniform sampling model and smaller the $\sigma_c$ more deterministic the conditional distributions are in the sampling model. 

See Figure \ref{fig:ternary_z_identifiability} for the corresponding plot for ternary $Z$.

\subsection{$I(X;Y|Z)$ vs. $H(Z)$ Tradeoff Curve}
Figure \ref{fig:example} shows the $I(X;Y|Z)$-$H(Z)$ tradeoff LatentSearch (Algorithm \ref{iterative_algorithm}) obtains for a joint distribution sampled as follows: The distribution of $Z$ as well as the conditional distributions $p(X|z),p(Y|z), \forall z$ are chosen uniformly at random over the simplex.
\begin{figure*}[t!]
\vskip 0.2in
\begin{center}
\label{fig:example1}
\includegraphics[width=0.35\textwidth]{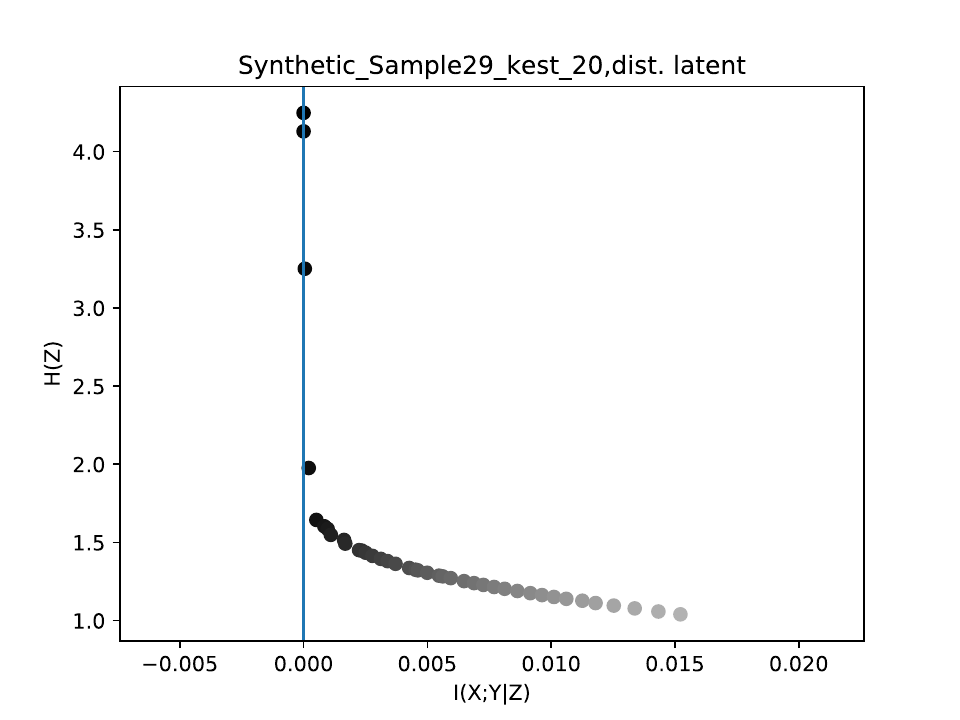}\caption{$I(X;Y|Z)$ vs. $H(Z)$ tradeoff curve obtained by LatentSearch (Algorithm \ref{iterative_algorithm}) for an arbitrary joint $p(x,y)$ from the graph $X\leftarrow Z \rightarrow Y$. We observed that the curve's shape is consistent across many runs irrespective of the graph, although the crossing point where $I(X;Y|Z)=0$ changes.}
\label{fig:example}
\end{center}
\end{figure*}

\subsection{Complete Details of Experiments}
\label{app:sim_details}
In this section, we explain some of the key implementation details for the experiments in Section \ref{sec:experiments} that were left out from the main text due to space constraints. 

\textbf{Sampling a low-entropy latent: } In many experiments, we sample distributions from either the latent graph or the triangle graph such that entropy of the latent confounder $Z$ is small. For example, we enforce $H(Z)\leq \theta$ for varying thresholds $\theta$ in Figure \ref{fig:synthetic_identifiability} . We use a form of rejection sampling combined with sampling from Dirichlet distributions with low-entropy as follows:

Suppose, we need $N$ samples where $H(Z)\leq \theta$. We initialize $\alpha^{(0)}=1$ and obtain $10N$ samples from $Dir(\alpha^{(0)})$. If we have at least $N$ samples where $H(Z)\leq \theta$, we are done. If not, we update $\alpha$ by halving it, i.e., $\alpha^{(1)}=0.5\alpha^{(0)}$. The lower the $\alpha$ value, the lower-entropy distributions we will obtain from $Dir(\alpha)$. Then we repeat this process until iteration $i$, where at least $N$ samples can be obtained from $10N$ samples using $Dir(\alpha^{(i)})$. We conclude by analyzing the histogram plots of $H(Z)$ that this method not only allows us to sample distributions where $H(Z)\leq \theta$ but also where $H(Z)\approx \theta$, providing us with a better control over the entropy of the latent confounder. 

\textbf{Choosing number of states of latent variable in LatentSearch: } Recall that LatentSearch allows us to discover a tradeoff between $H(Z)$ and $I(X;Y|Z)$, which, combined with a $I(X;Y|Z)$ for conditional independence, can be used to approximate common entropy. Since only $X,Y$ are observed, we do not know how many states $k$ $Z$ has. As pointed out in the main text, one can try all $k\leq mn$ without loss of generality, where $m,n$ are the number of states of $X,Y$, respectively. However, in practice, this takes a long time. Furthermore, we identified that this is not necessary for the estimation of common entropy.

We observe that if we search over $Z$ with very large number of states, e.g., $k=mn$, performance of LatentSearch does not improve compared to having $k=\min\{m,n\}$. This is because the number of optimization parameters increases significantly which may require many more iterations. It also slows down the algorithm. We observed that choosing $k=\max\{m,n\}$ provides the smallest entropy latents in practice. Therefore, we set $k=\max\{m,n\}$ in LatentSearch for our experiments. 

\textbf{Sampling DAGs for testing EntropicPC in Figure \ref{fig:entropicpc_synthetic}: } We first sample Erd\"{o}s-\renyi graphs with parameter $0.2$. Since there are $10$ nodes, this corresponds to an average degree of $2$ per node. Note that these graphs are undirected. We need to make them directed and ensure there are no cycles. For this, we randomly picked a total order between the nodes and directed the edges respecting that total order. It can be easily shown that the resulting graphs have no cycles. 

\textbf{Sampling joint distributions for a given DAG in Figure \ref{fig:entropicpc_synthetic}: } For every DAG we generate, we need to obtain a joint distribution from which we sample a dataset. To obtain a distribution for a given graph, we employ a method from Chickering and Meek~\cite{chickering2002finding}. It is known that constraint-based methods require the data to be faithful to the graph, i.e., every pair of variables that are connected in the graph should be dependent. This notion should also be true under any conditioning set. In practice, this does not always hold. Specifically, nodes that are far away from each other in the graph might be almost statistically independent. To ensure faithfulness in practice, Chickering and Meek use a method to sample conditional distributions for a given DAG~\cite{chickering2002finding}. In summary, it ensures that parent-child relations are far from uniform. The details of their sampling method, which we also use, are as follows:

For a variable $X$ with $m$ states, they define the vector $\mat{v}=\frac{1}{T}(\frac{1}{1},\frac{1}{2},\hdots, \frac{1}{m}), T=\sum_{i=1}^m\frac{1}{i}$. For the $j^{th}$ instantiation of the parent set of $X$, $pa_x^{(j)}$, they use the vector $\mat{v}_j$ which is the $j$-shifted version of $\mat{v}$. The shifts are cyclic in the sense that $\mat{v}_m=\mat{v}$. They later sample $p(X|pa_x^{(j)})$ from the Dirichlet distribution with parameter vector $\mat{v}_j$. When each coordinate parameter of Dirichlet is identical, the expected distribution is uniform. When Dirichlet is sampled with parameters $\mat{v}_j$ as given above, each coordinate has a different parameter. Indeed, the expected distribution becomes $\mat{v}_j$ rather than uniform. Therefore, this method ensures that $p(x|pa_x^{j})$ is typically far from uniform, encouraging strong dependence between parents and the children in the graph.  

\textbf{Details specific to Figure \ref{fig:latentsearch_performance}: }
We sampled $100$ distributions from the latent graph for each value of $n$, where $X,Y,Z$ all have $n$ states. In each distribution, we ensure that $H(Z)\leq 1$ using the low-entropy sampling method described above. We use LatentSearch on $50$ different values of $\beta$, uniformly spaced in the interval $[0,0.1]$. We set the latent variable for LatentSearch to $n$ number of states. We run LatentSearch for $500$ iterations each time. We used the conditional mutual information threshold of $0.001$: In other words, of the algorithm outputs for the $50$ $\beta$ values used, we pick the smallest entropy $Z$ discovered by the algorithm among those that ensure $I(X;Y|Z)\leq 0.001$. We then compare this value with the entropy of the true latent confounder.

\textbf{Details specific to Figure \ref{fig:synthetic_conjecture}: }
We sample $1000$ distributions from the triangle graph. As mentioned in the main text, we use LatentSearch output to approximate common entropy. The settings for LatentSearch are the same as above, i.e., we use $50$ $\beta$ values uniformly spaced in the range $[0,0.1]$, use $n$ states for the latent and run the algorithm for $500$ iterations. Entropy recovered by LatentSearch for a pair $X,Y$ is then compared with $\min\{H(X),H(Y)\}$. The $y-$axis shows that fraction of times the reconstructed latent has entropy of at least $\alpha\min\{H(X),H(Y)$ for different values of $\alpha$.

\textbf{Details specific to Figure \ref{fig:synthetic_identifiability}: }
For this figure we sample $1000$ distributions from both triangle graph and the latent graph for various upper bounds on the entropy of $Z$. Low-entropy sampling is done as explained before. Finally, Algorithm \ref{causal_inference_algorithm} is used to identify the true causal graph with $\theta=0.8\min\{H(X),H(Y)\}$. LatentSearch settings used within Algorithm \ref{causal_inference_algorithm} are as given previously.  

\textbf{Details specific to Figure \ref{fig:tuebingen}: }
Tuebingen dataset consists of around $100$ real cause-effect pairs. We run LatentSearch to understand whether real cause-effect pairs can be made independent by low-entropy variables. As explained in the main text, we used different conditional independence thresholds. Visual inspection of $I-H$ curves suggest that $0.001$ is a good threshold for this dataset. This can be done by checking, for the given range of $\beta$ values, where the curve disengages form the $x=0$ axis. We used $100$ $\beta$ values in the range $[0,0.1]$ and run the LatentSearch algorithm for $1000$ iterations for this experiment.

\textbf{Details specific to Figure \ref{fig:IHplane}: }
This figure is an example of the tradeoff curve LatentSearch discovers for various values of $\beta$. Each dot corresponds to a joint distribution $p(x,y,z)$ constructed by LatentSearch for a given value of $\beta$ after a certain number of iterations. As can be seen from (\ref{eq:loss}), smaller $\beta$ values enforce smaller $I(X;Y|Z)$. The horizontal line indicates $\min\{H(X),H(Y)\}$. $X,Y$ can always be separated with this much entropy since by definition $\ci{X}{Y}{X}$ and $\ci{X}{Y}{Y}$. Ideally, i.e., with infinite samples and infinitely many $
\beta$ values, the point that intersects the $x=0$ line (i.e., the $y-$ axis) should give the common entropy. To account for finite-sample effects, we use a different horizontal line, which we call conditional mutual information threshold, as described before. 

\textbf{Details specific to Figure \ref{fig:mediator_synthetic}: }
We sample from the graph $X\rightarrow Z\rightarrow Y$ and investigate $H(Z)$. Note that $Z$ acts as a mediator if it is not observed. Our goal is to understand if it is typical to have low-entropy mediators. We set the dimensions of $X,Y,Z$ to $n$. If $Z$ has $k$ states, $H(Z)\leq \log(k)$. Our goal is to demonstrate that unless $k$ is a constant, $H(Z)$ scales similar to $H(X),H(Y)$. Most of the details of this experiment are provided in the main text. Note that when $\alpha_{Dir}\leq \frac{1}{n}$ for a  distribution with $n$ states, a sample from Dirichlet distribution typically looks very peaked, i.e., it has very high probability for one of the states, and very low probabilities for the rest. When such a low $\alpha_{Dir}$ is used to sample the conditional of $p(Z|x)$ for every value of $x$, $X$ and $Z$ are almost deterministically related, i.e., there is almost no additional entropy introduced in the system. We show that even then the entropy of the mediator scales. Larger $\alpha_{Dir}$ values will give distributions that are closer to uniform, which in turn will make $Z$ close to uniform and have $\log(n)$ entropy.

\textbf{Details specific to Figure \ref{fig:entropicpc_synthetic}: }
Our goal is to demonstrate how well output of EntropicPC approximates a true causal graph by checking graphical distances between the skeleton and essential graph. Essential graph is the mixed graph where undirected edges show the edges that cannot be learned whereas directed edges show the edges that can be learned. Structural Hamming Distance counts the number of edges that should be reverted, added or removed to change the output graph into the true graph. For skeleton discovery, we see edge discovery as a classification problem and calculate the F-score as an established summary of the classifier performance. The graph and distribution sampling are described above. We sample a dataset with $100000$ variables and for each figure, we subsample varying number of samples from this dataset without replacement. This is repeated for $100$ different graphs, and their corresponding distributions. 

\subsection{EntropicPC on Line and Collider Graphs}
\label{sec:appEntropicPC}
To demonstrate effectiveness of EntropicPC compared to PC on the simplest possible graph, we conducted the experiments of Figure \ref{fig:entropicpc_synthetic} on the line graph $X\rightarrow Y \rightarrow Z$ and the collider graph $X\rightarrow Y\leftarrow Z$. The results are given in Figures \ref{fig:entropicpc_line} and \ref{fig:entropicpc_collider}, respectively.
\begin{figure}[t]
	\centering
	\begin{subfigure}[b]{0.3\textwidth}
		\includegraphics[width=\textwidth]{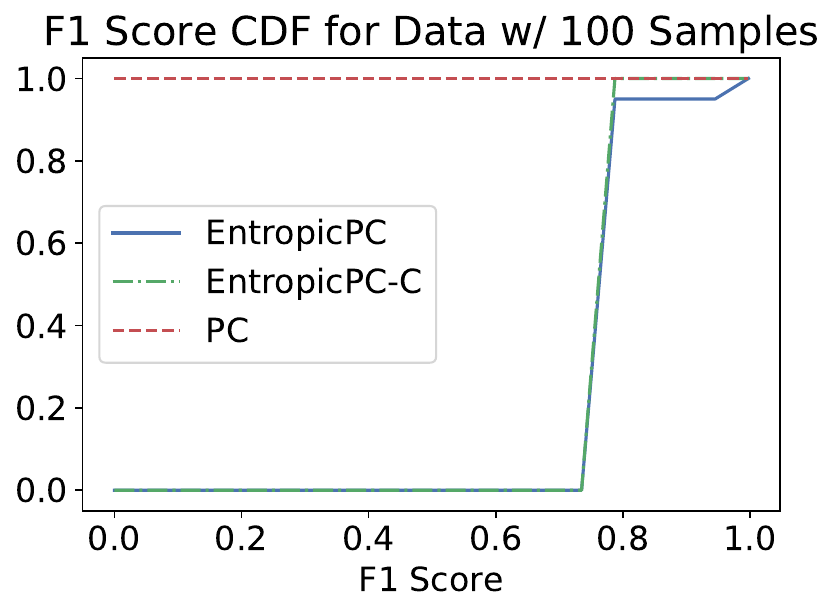}
		\caption{ }
		\label{fig:F1_n3N100_cum_line}
	\end{subfigure}
	\begin{subfigure}[b]{0.3\textwidth}
		\includegraphics[width=\textwidth]{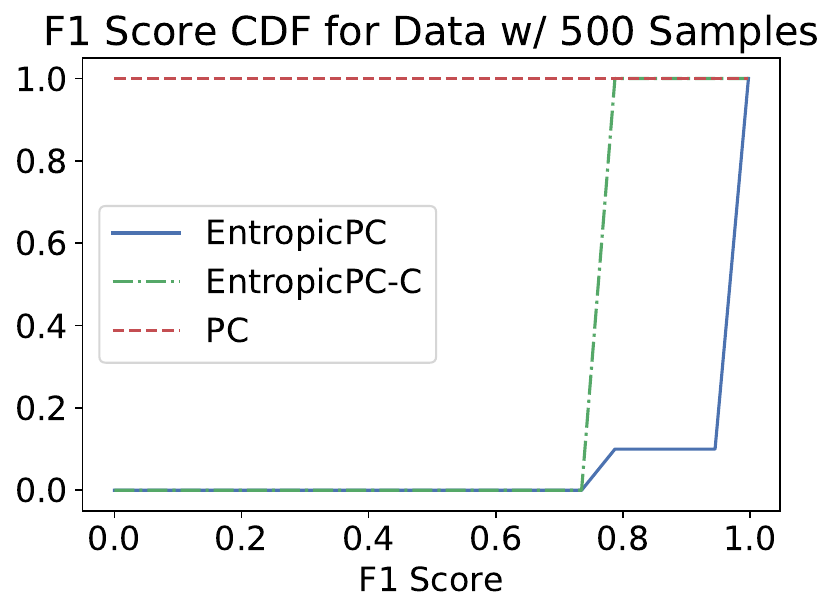}
		\caption{ }
		\label{fig:F1_n3N500_cum_line}
	\end{subfigure}
	\begin{subfigure}[b]{0.3\textwidth}
		\includegraphics[width=\textwidth]{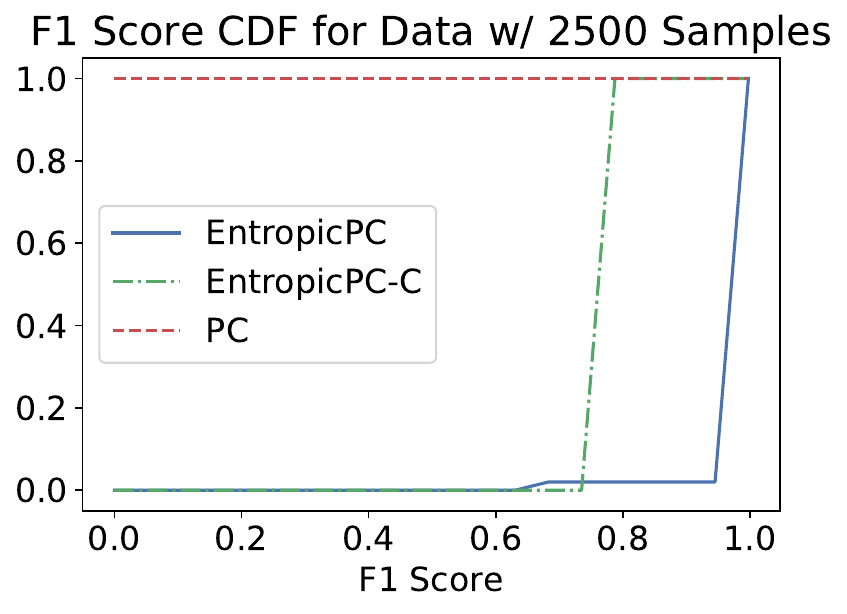}
		\caption{ }
		\label{fig:F1_n3N2500_cum_line}
	\end{subfigure}
	\begin{subfigure}[b]{0.3\textwidth}
		\includegraphics[width=\textwidth]{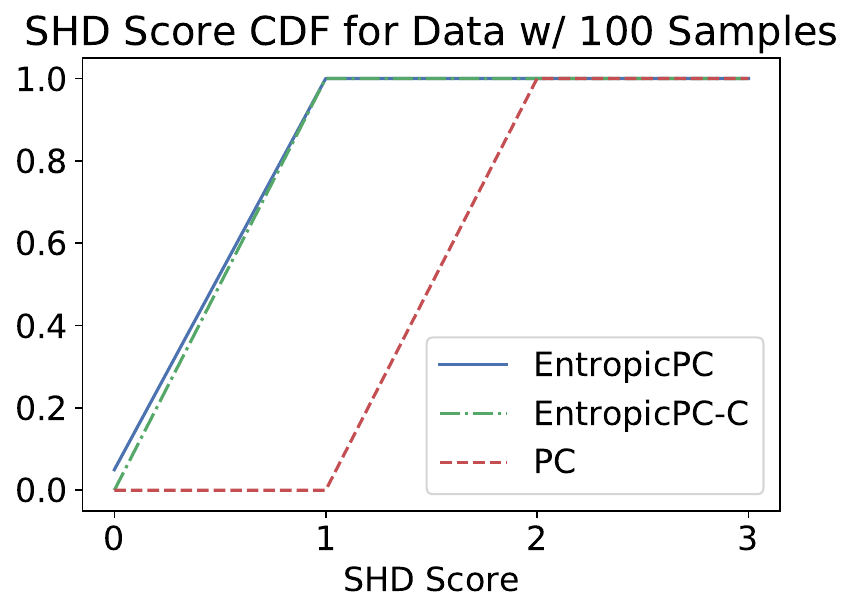}
		\caption{ }
		\label{fig:SHD_n3N100_cum_line}
	\end{subfigure}
	\begin{subfigure}[b]{0.3\textwidth}
		\includegraphics[width=\textwidth]{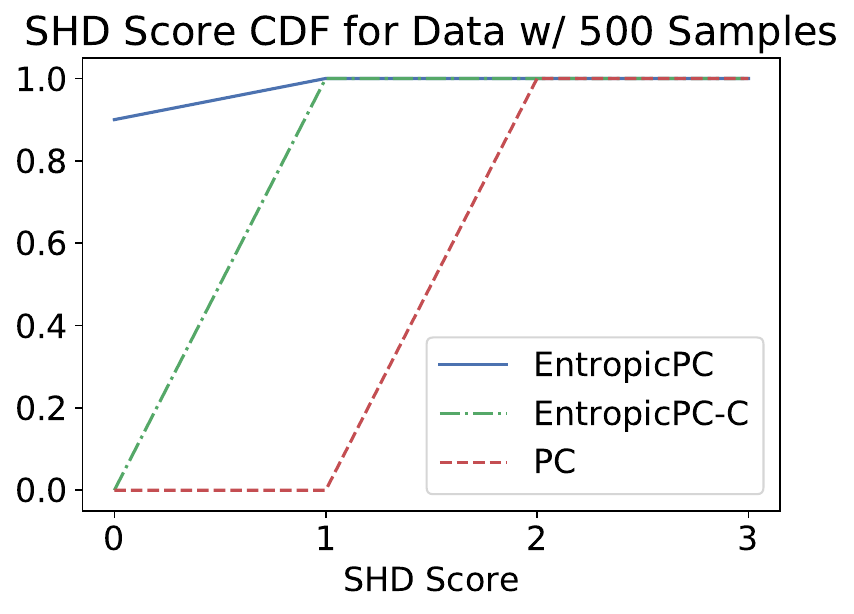}
		\caption{ }
		\label{fig:SHD_n3N500_cum_line}
	\end{subfigure}
	\begin{subfigure}[b]{0.3\textwidth}
		\includegraphics[width=\textwidth]{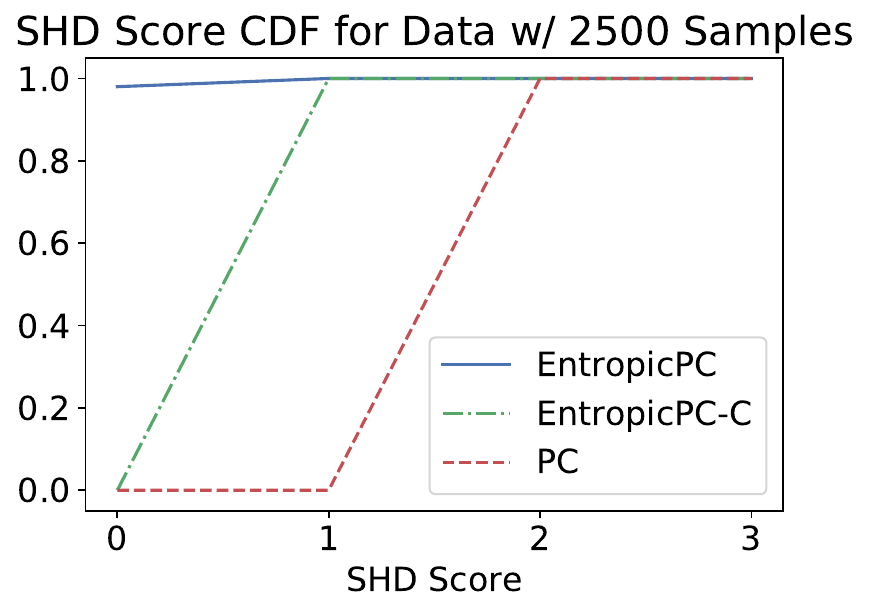}
		\caption{ }
		\label{fig:SHD_n3N2500_cum_line}
	\end{subfigure}
	\caption{Performance of EntropicPC, EntropicPC-C and PC in the graph $X\rightarrow Y\rightarrow Z$. All three methods perform identically for $10000$ samples (not shown). Both EntropicPC and EntropicPC-C consistently outperforms baseline PC. }
	\label{fig:entropicpc_line}
\end{figure}

\begin{figure}[t]
	\centering
	\begin{subfigure}[b]{0.3\textwidth}
		\includegraphics[width=\textwidth]{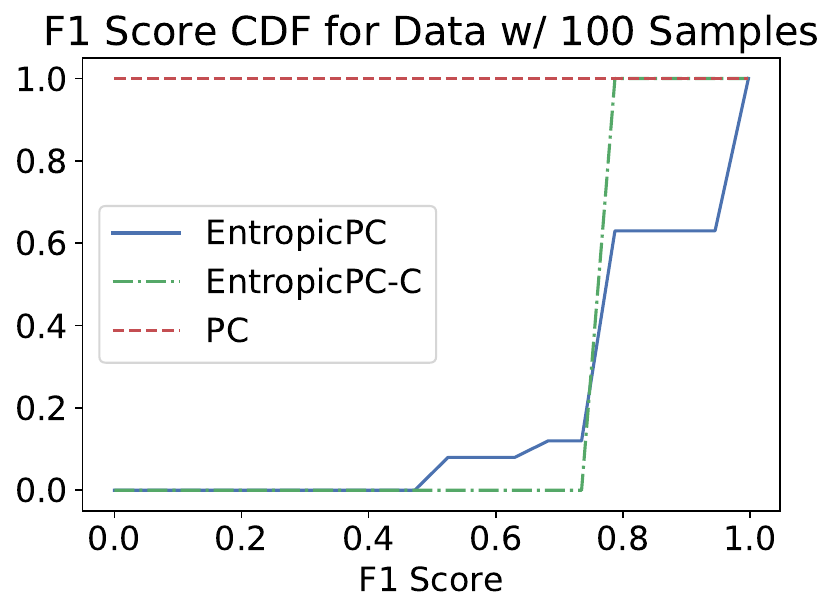}
		\caption{ }
		\label{fig:F1_n3N100_cum_collider}
	\end{subfigure}
	\begin{subfigure}[b]{0.3\textwidth}
		\includegraphics[width=\textwidth]{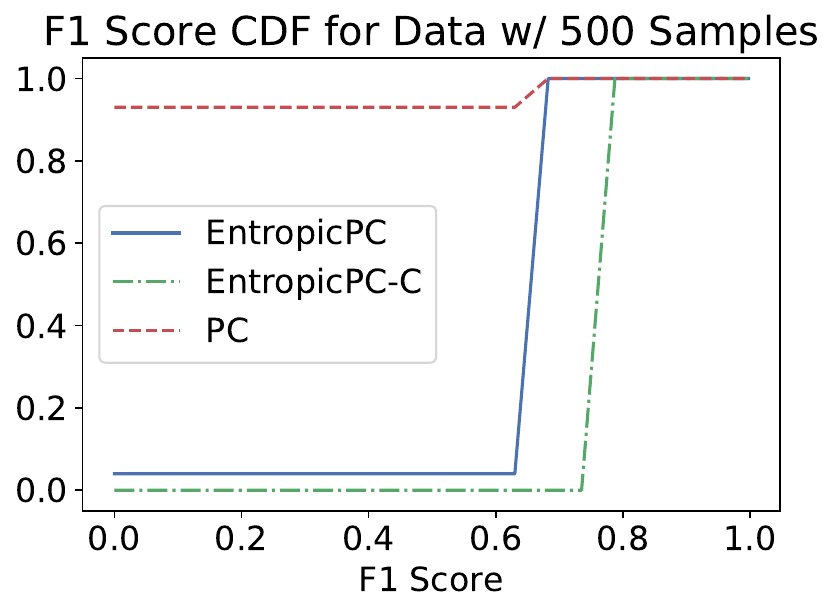}
		\caption{ }
		\label{fig:F1_n3N500_cum_collider}
	\end{subfigure}
	\begin{subfigure}[b]{0.3\textwidth}
		\includegraphics[width=\textwidth]{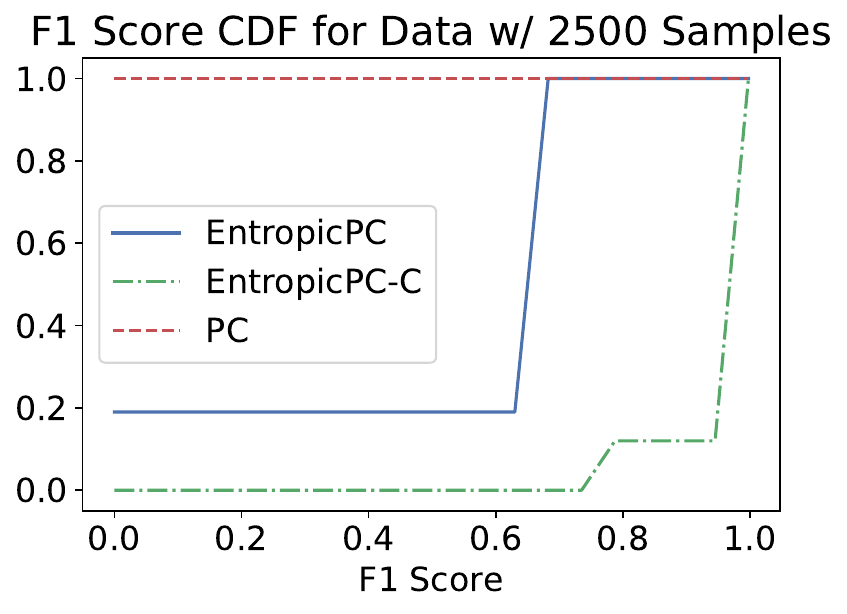}
		\caption{ }
		\label{fig:F1_n3N2500_cum_collider}
	\end{subfigure}
	\begin{subfigure}[b]{0.3\textwidth}
		\includegraphics[width=\textwidth]{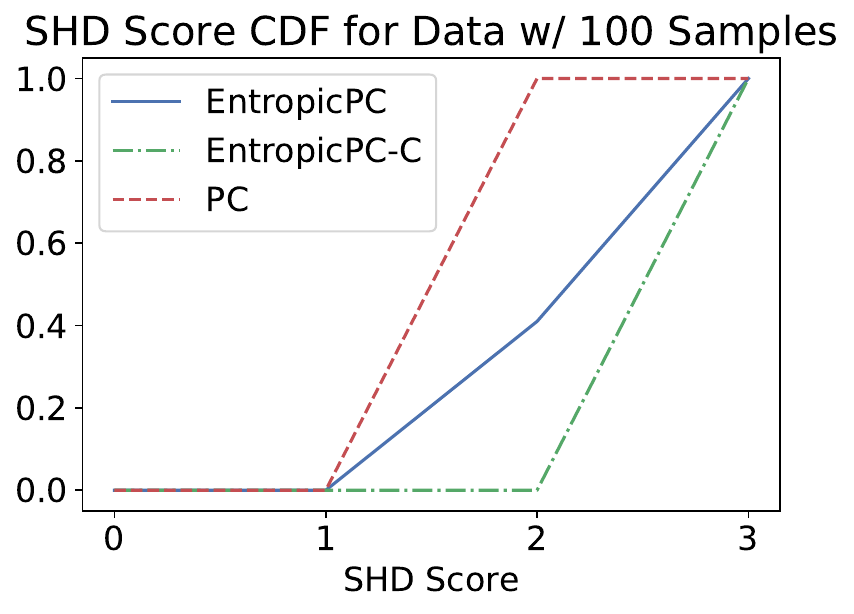}
		\caption{ }
		\label{fig:SHD_n3N100_cum_collider}
	\end{subfigure}
	\begin{subfigure}[b]{0.3\textwidth}
		\includegraphics[width=\textwidth]{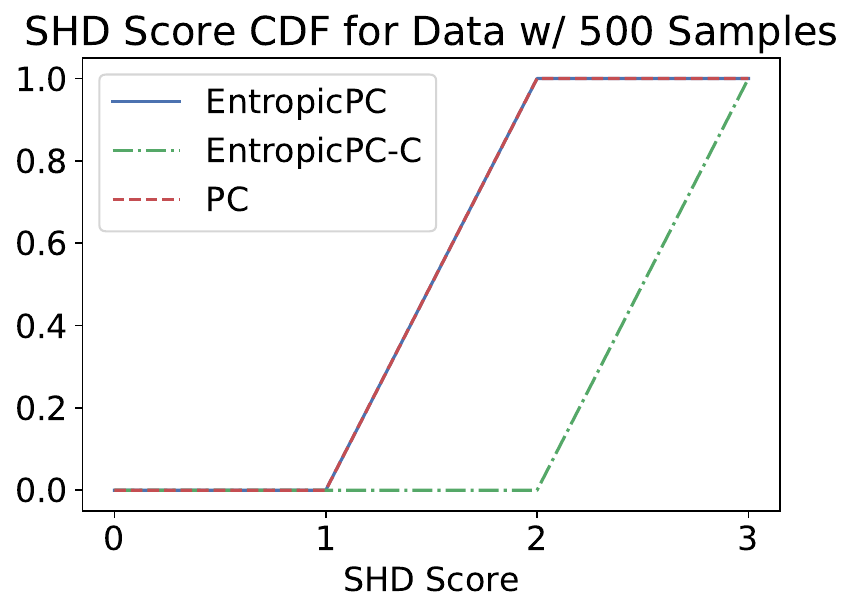}
		\caption{ }
		\label{fig:SHD_n3N500_cum_collider}
	\end{subfigure}
	\begin{subfigure}[b]{0.3\textwidth}
		\includegraphics[width=\textwidth]{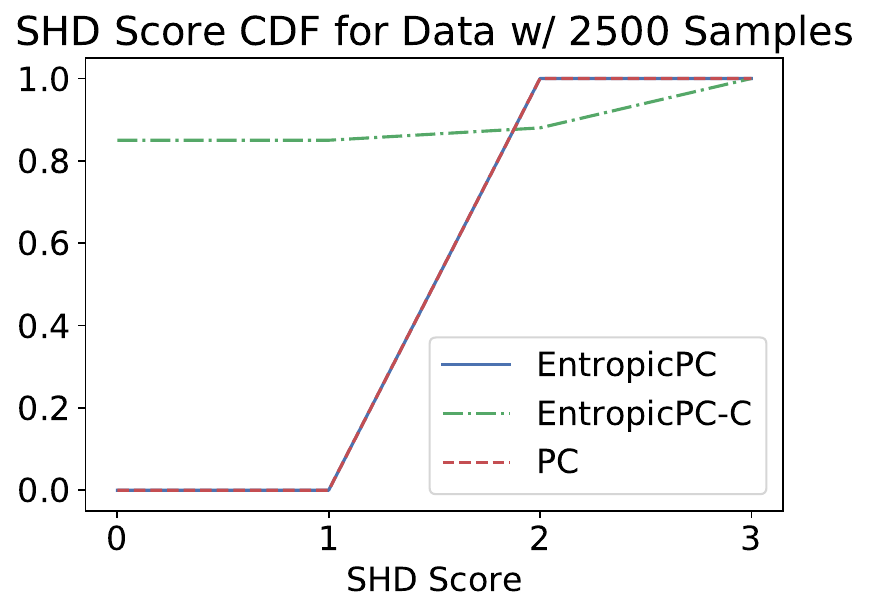}
		\caption{ }
		\label{fig:SHD_n3N2500_cum_collider}
	\end{subfigure}
	\caption{Performance of EntropicPC, EntropicPC-C and PC in the graph $X\rightarrow Y\leftarrow Z$. All three methods perform identically for $10000$ samples (not shown). Different from the line graph, for collider graph in the very small sample regime, even though the proposed methods significantly outperforms PC in terms of skeleton discovery, PC gets a slight edge in performance in terms of SHD. Note that PC tends to declare variables independent in the small sample regime. It is likely that for $100$ samples, PC often estimates empty graph, which has SHD of $2$ from the essential graph. This explains the discrepancy between the performance for F1 and SHD scores for $100$ samples.}
	\label{fig:entropicpc_collider}
\end{figure}
 
\subsection{Comparing LatentSearch with EM, NMF and Gradient descent}
\label{sec:EM_NMF_GD}

\subsubsection{Comparison to gradient descent}
Instead of using LatentSearch for minimizing the loss in $(\ref{eq:loss})$, one can use gradient descent. Even though the objective is not convex, gradient descend will still output a stationary point if it converges. However gradient descend comes with many practical issues, as we detail in the following, and support with experiments.

First, we observed that iterative update step is slightly faster than the gradient descent step: Average time for iterative update:0.000063 seconds. Average time for gradient update: 0.000078 seconds. More importantly, gradient descent takes much longer to converge and does not even achieve the same performance.

As observed in Figure \ref{fig:simulation2}, gradient descent converges only after 350000 iterations, whereas we observed that iterative update converges after around 200 iterations. Based on the average update times, this corresponds to a staggering difference of 0.01 seconds for the iterative algorithm vs.\ 27.3 seconds for the gradient descent algorithm. Although these results are for when $n=m=k=5$ states, we observed single iterative update to be faster than single gradient update, giving similar performance comparison results for $n=m=k=80$ states. 

The above result is for a constant step size of $0.001$. With smaller step size, convergence slows down even further. With larger step size, gradient descent does not converge. 
\begin{figure*}[t!]
\vskip 0.2in
\begin{center}
    \begin{subfigure}[b]{0.32\textwidth}
\label{fig:iter0200}\includegraphics[width=\textwidth]{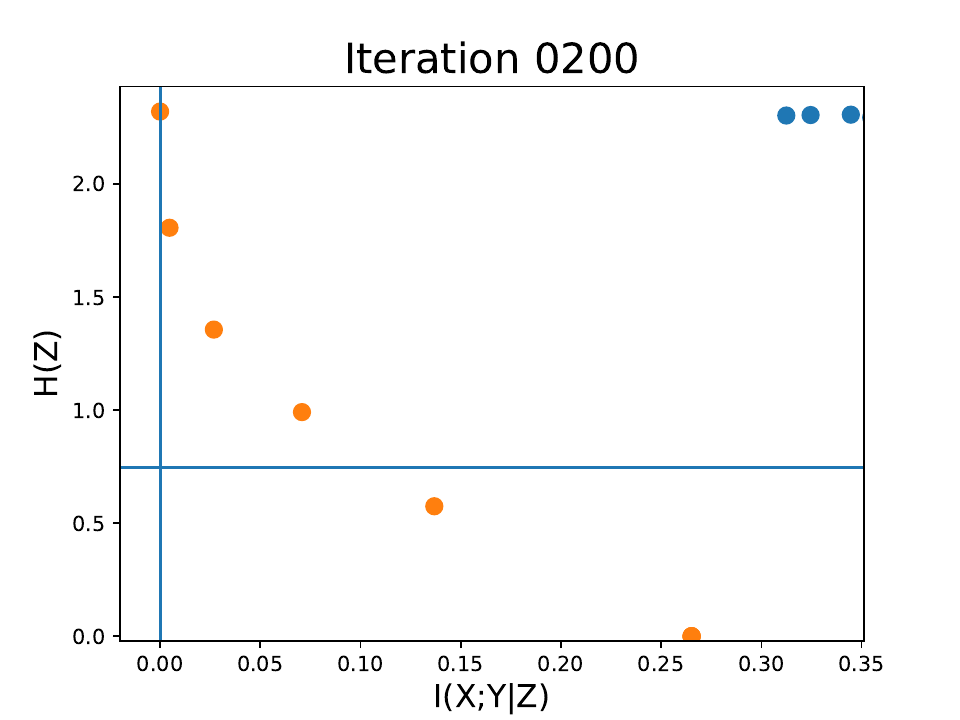}
	\caption{ }
\end{subfigure}	
    \begin{subfigure}[b]{0.32\textwidth}
	\label{fig:iter16000}\includegraphics[width=\textwidth]{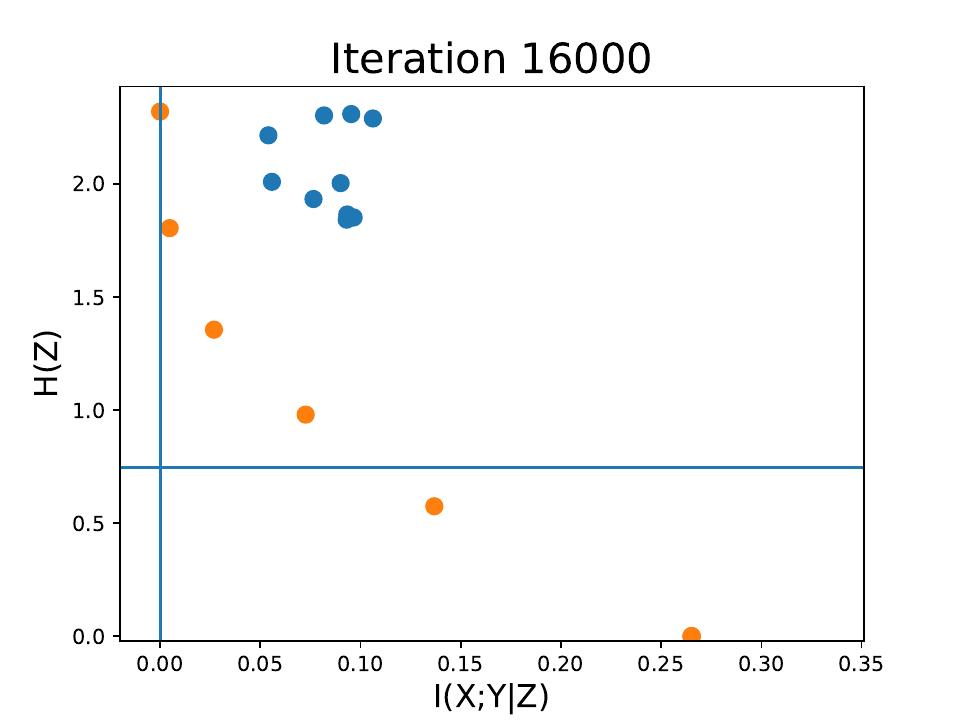}
	\caption{ }
\end{subfigure}	
    \begin{subfigure}[b]{0.32\textwidth}
	\label{fig:iter350000}\includegraphics[width=\textwidth]{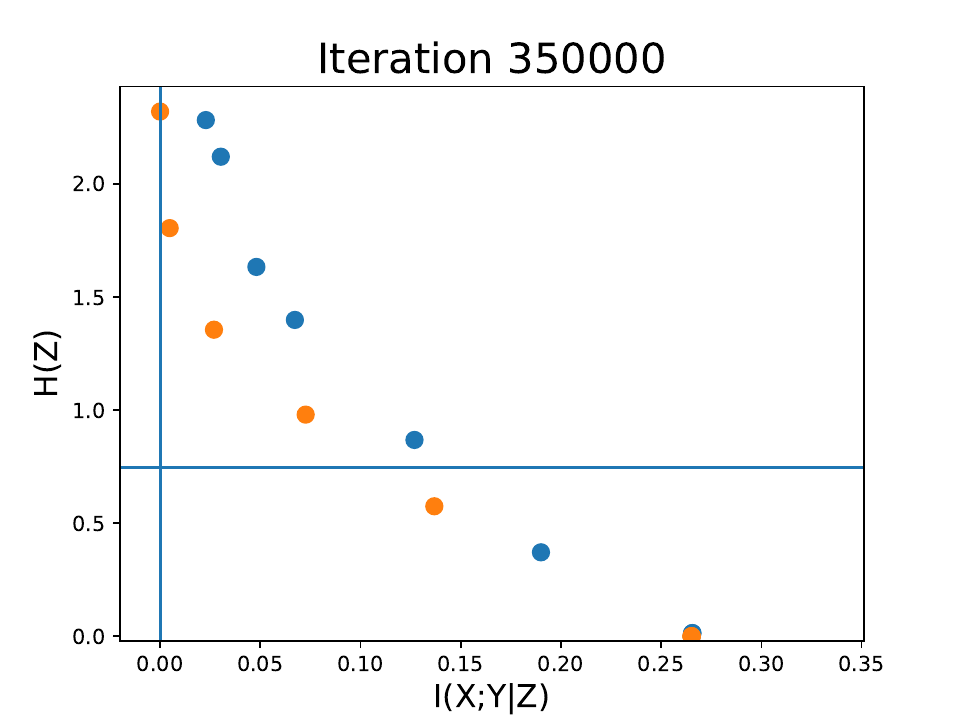}
	\caption{ }
\end{subfigure}	
\caption{Comparison of the iterative algorithm with gradient descent. Blue points show the trajectory of gradient descent, whereas orange points show the trajectory for Algorithm \ref{iterative_algorithm} for 10 randomly initialized points with different $\beta$ values in loss (\ref{eq:loss}). Gradient descent takes 350,000 iterations to converge whereas iterative algorithm converges in about 200 iterations. Moreover, the points achieved by iterative algorithm are strictly better than gradient descent after convergence.  }
\label{fig:simulation2}
\end{center}
\vskip -0.2in
\end{figure*}

\subsubsection{Comparison with EM algorithm}
EM is the first algorithm suggested for solving the pLSA problem \cite{hofmann1999probabilistic}. For the details of EM within this framework, please see \cite{hofmann1999probabilistic}. However the EM algorithm for pLSA problem does not have any incentive to minimize the entropy of the latent factor. 

In order to see how EM affects the entropy of the discovered latent variable, we run EM algorithm by initializing it at the points that are output by LatentSearch (Algorithm \ref{iterative_algorithm}). Results are illustrated in Figure \ref{fig:simulation3}. We observe that the points obtained in the $I-H$ plane migrate towards $I(X;Y|Z)=0$ line, while staying above what we believe is a fundamental lower bound curve. This step does not improve our algorithm, as it increases the entropy of the latent variables. 

\begin{figure*}[ht!]
\vskip 0.2in
\begin{center}
	    \begin{subfigure}[b]{0.32\textwidth}
	    \label{fig:EM0}
		\includegraphics[width=\textwidth]{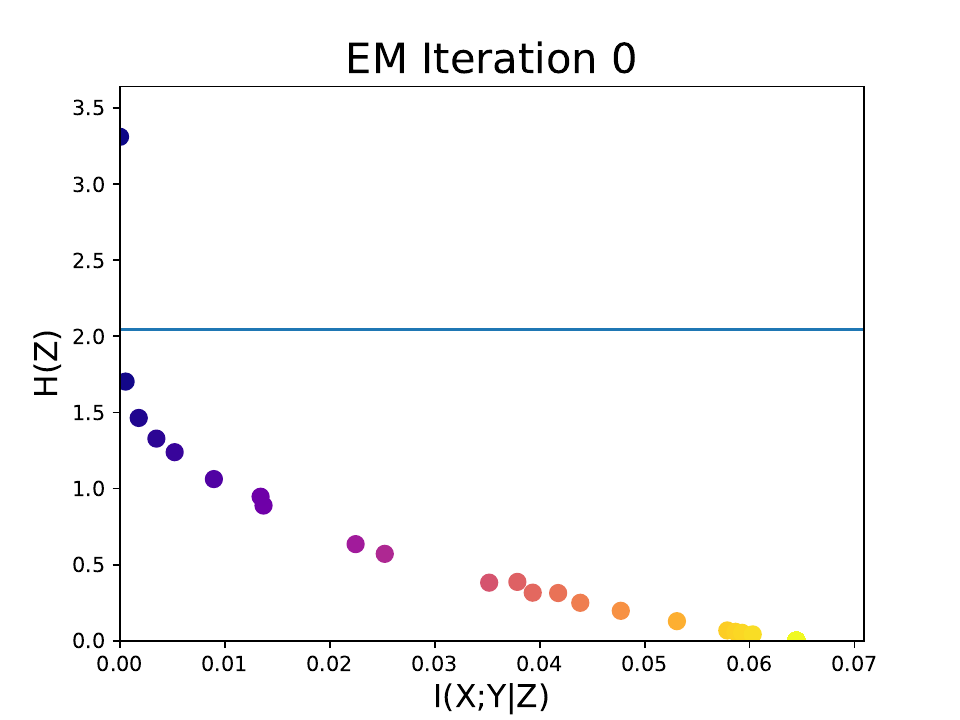}
		\caption{ }
	\end{subfigure}
	    \begin{subfigure}[b]{0.32\textwidth}
    \label{fig:EM60}
	\includegraphics[width=\textwidth]{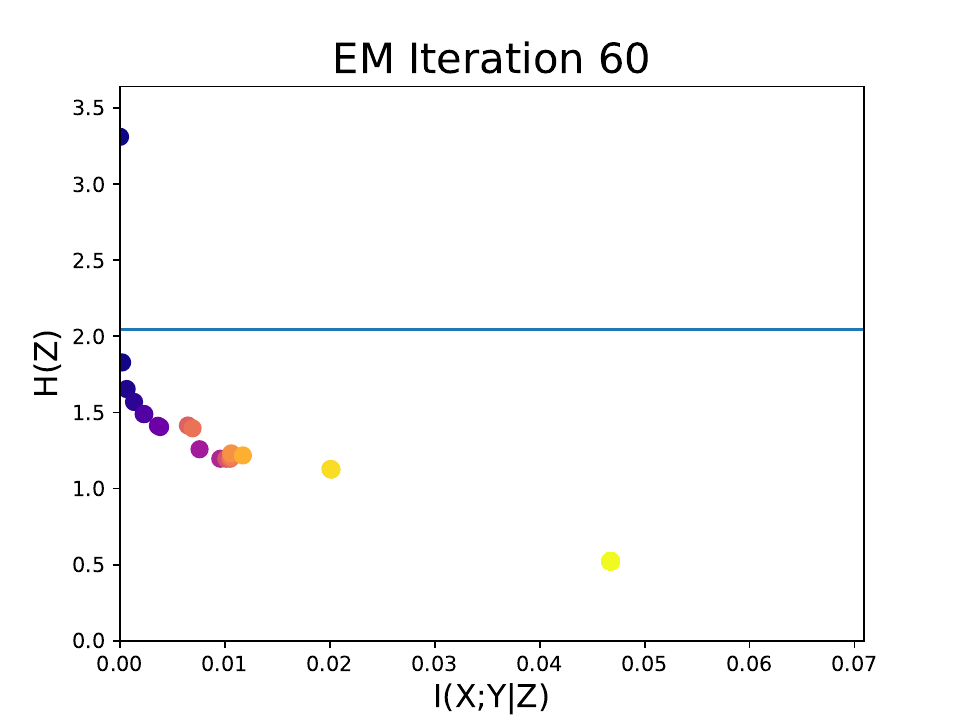}
	\caption{ }
\end{subfigure}
	    \begin{subfigure}[b]{0.32\textwidth}
	\label{fig:EM300}
	\includegraphics[width=\textwidth]{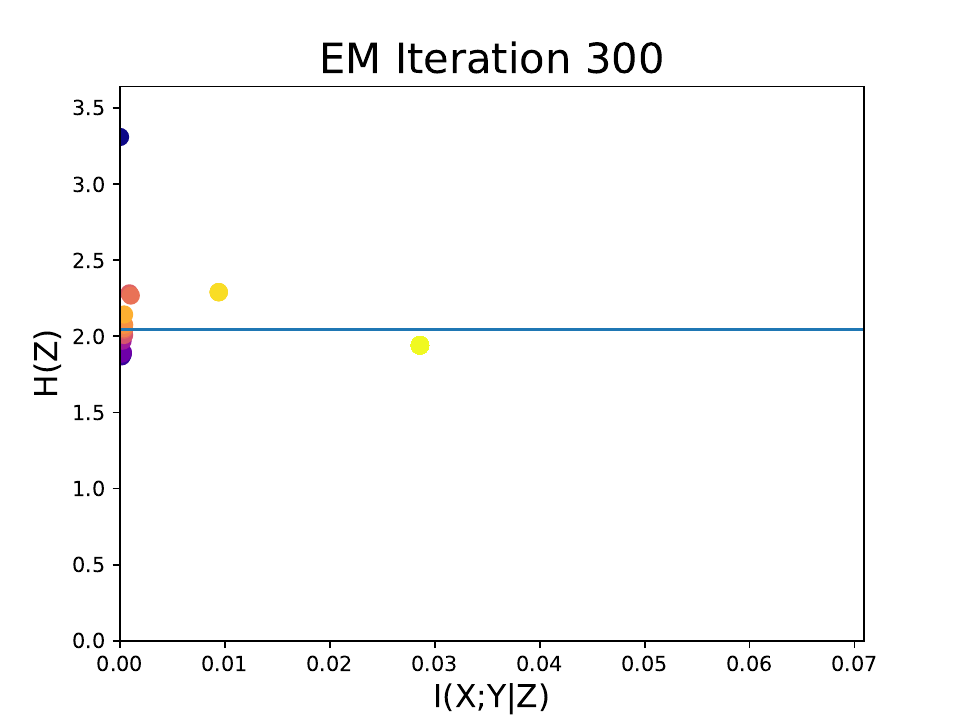}
	\caption{ }
\end{subfigure}
\caption{Applying EM to the output of iterative algorithm migrates points to $I(X;Y|Z=0)$ line: (a) Latent variables discovered by LatentSearch (Algorithm \ref{iterative_algorithm}) shown on the $I(X;Y|Z)-H(Z)$ plane. (b,c) After applying EM algorithm on the points in (a) after 60 and 300 iterations. Observe that the points always remain above the line depicted by LatentSearch (Algorithm \ref{iterative_algorithm}).}
\label{fig:simulation3}
\end{center}
\vskip -0.2in
\end{figure*}

\subsubsection{Comparison with NMF}
Consider the joint distribution matrix $\mat{M}$. Suppose we find an approximation to this matrix as $\mat{M}\approx \mat{UV}$ where the common dimension of $\mat{U,V}$ is $k$ through NMF. This is equivalent to setting the dimension of the latent variable to $k$. This can be seen as a hard entropy threshold on the entropy of the latent factor since $H(Z)\leq \log(k)$. We can sweep through different dimensions and see how NMF performs compared to LatentSearch (Algorithm \ref{iterative_algorithm}). Note that NMF is in general hard to solve. A commonly used approach is the iterative algorithm: Initialize $\mat{U}_0,\mat{V}_0$. Find the best $\mat{U}_1$ such that $\mat{M} \approx \mat{U}_1\mat{V}_0$. Then find the best $\mat{V}_1$ such that $\mat{M} \approx \mat{U}_1\mat{V}_1$ and iterate. In the experiments, we used this iterative algorithm together with $l_1$ loss.

\begin{figure*}[h!]
\vskip 0.2in
\begin{center}
    \begin{subfigure}[b]{0.4\textwidth}
    \label{fig:nmf_latent}
	\includegraphics[width=\textwidth]{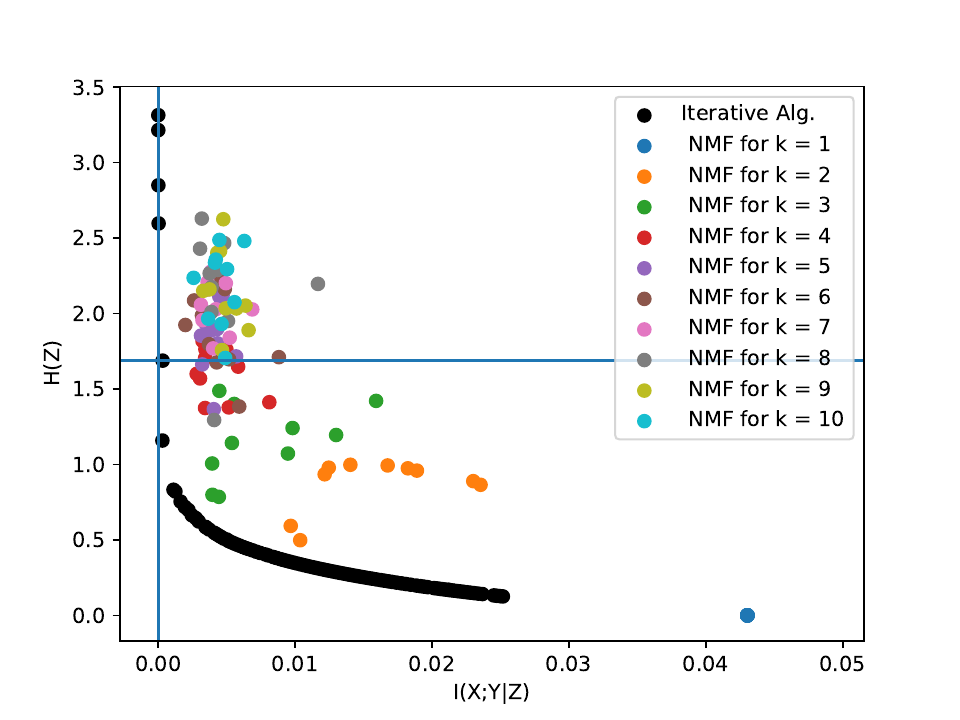}
	\caption{Causal Graph $X\leftarrow Z \rightarrow Y$}
\end{subfigure}	
    \begin{subfigure}[b]{0.4\textwidth}
	\label{fig:nmf_complete}
	\includegraphics[width=\textwidth]{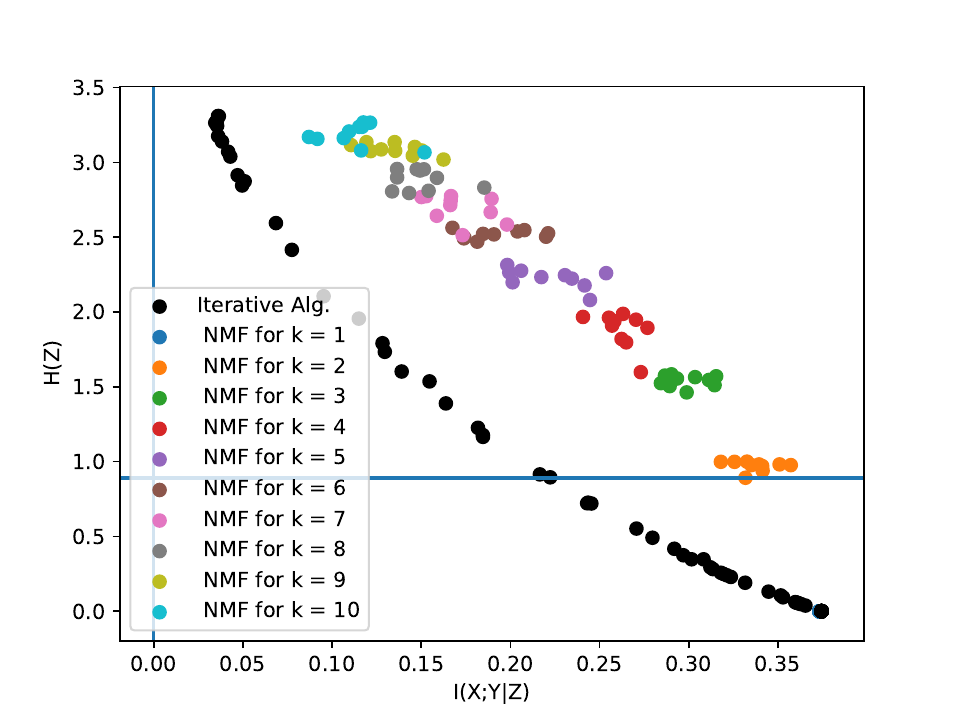}
	\caption{Causal Graph $X\leftarrow Z \rightarrow Y, X\rightarrow Y$}
\end{subfigure}	
\caption{Comparison of the iterative algorithm to NMF for when $|X|=|Y|=20,|Z|=10$. When the true model comes from the causal graph $X\leftarrow Z \rightarrow Y$ in (a), iterative algorithm successfully finds latent variables that with entropy at most true latent entropy (shown as {\color{blue}blue} horizontal line), whereas NMF cannot achieve the same performance, irrespective of the dimension restriction to the latent variable. In (b) data comes from the causal model $X\leftarrow Z \rightarrow Y, X\rightarrow Y$. Although neither algorithm can identify a latent factor that makes $X,Y$ conditionally independent (vertical {\color{blue} blue} line), iterative algorithm finds strictly better latent factors in terms of both small entropy and conditional mutual information between $X,Y$.}
\label{fig:simulation4}
\end{center}
\vskip -0.2in
\end{figure*}
\bibliographystyle{plain}

\end{document}